\documentclass{article}

\title{Second-order Quantile
Methods\linebreak for Experts and
Combinatorial Games}

\author{
  Wouter M. Koolen
  \and
  Tim {van Erven}
}

\usepackage{natbib}
\usepackage{amssymb}
\usepackage{amsthm}
\newtheorem{theorem}{Theorem}
\newtheorem{lemma}[theorem]{Lemma}
\usepackage{algcompatible}
\usepackage{algorithm}
\bibliographystyle{plainnat}
\usepackage{fullpage}
\newcommand{\acks}[1]{}
\newenvironment{keywords}{\textbf{Keywords:}}{}

\DeclareRobustCommand{\VAN}[3]{#2} %

\usepackage{amsmath}
\usepackage{times} %
\usepackage{notation}
\usepackage{hyperref}
\usepackage{amsfonts}
\usepackage{bm}
\usepackage{xcolor}
\usepackage{accents} 
\usepackage{tikz}
\usetikzlibrary{shapes}
\usetikzlibrary{shadows}

\let\top\intercal
\let\loss\ell
\DeclareBoldMathCommand{\a}{a}
\DeclareBoldMathCommand{\b}{b}
\DeclareBoldMathCommand{\c}{c}
\DeclareBoldMathCommand{\w}{w}
\DeclareBoldMathCommand{\v}{v}
\DeclareBoldMathCommand{\vloss}{\loss}
\DeclareBoldMathCommand{\e}{e}
\DeclareBoldMathCommand{\r}{r}
\DeclareBoldMathCommand{\u}{u}
\DeclareBoldMathCommand{\x}{x}
\DeclareBoldMathCommand{\zeros}{0}
\DeclareBoldMathCommand{\ones}{1}
\DeclareBoldMathCommand{\vpi}{\pi}
\DeclareMathOperator*{\ex}{\mathbb E}

\DeclareMathOperator{\conv}{conv}

\let\min\indef
\DeclareMathOperator*{\min}{\vphantom{\sup}min}
\let\max\indef
\DeclareMathOperator*{\max}{\vphantom{\sup}max}
\DeclareMathOperator*{\argmin}{arg\,min}

\DeclareMathOperator{\erf}{erf}
\DeclareMathOperator{\re}{\triangle}

\renewcommand{\df}{=}

\newcommand{\U}{\mathcal U}
\newcommand{\reals}{\mathbb{R}}
\newcommand{\der}{\mathrm{d}}
\newcommand{\intder}{\,\der}

\newcommand{\refexperts}{\mathcal{K}}   %
\newcommand{\mleta}{\hat{\eta}}
\newcommand{\past}{\r_{1:T}}
\renewcommand{\half}{\frac{1}{2}}
\newcommand{\thalf}{\tfrac{1}{2}}

\begin{document}

\maketitle

\begin{abstract}
  We aim to design strategies for sequential decision making that adjust
  to the difficulty of the learning problem. We study this question both in
  the setting of prediction with expert advice, and for more general combinatorial decision tasks. We are not satisfied with just guaranteeing
  minimax regret rates, but we want our algorithms to perform
  significantly better on easy data. Two popular ways to formalize such
  adaptivity are second-order regret bounds and quantile
  bounds. The underlying notions of `easy data', which
  may be paraphrased as ``the learning problem has small
  variance'' and ``multiple decisions are useful'', are synergetic. But
  even though there are sophisticated algorithms that exploit one of the
  two, no existing algorithm is able to adapt to both.

In this paper we outline a new method for obtaining such adaptive
algorithms, based on a potential function that aggregates a range of
learning rates (which are essential tuning parameters). By choosing the
right prior
 we construct
efficient algorithms and show that they reap both benefits by proving
the first bounds that are both second-order and incorporate quantiles.
\end{abstract}

\begin{keywords}
Online learning, prediction with expert advice, combinatorial
prediction, easy data
\end{keywords}

\section{Introduction}
We study the design of adaptive algorithms for online learning
\citep{CesaBianchiLugosi2006}. Our work starts in the \markdef{hedge
setting} \citep{FreundSchapire1997}, a core instance of prediction with
expert advice \citep{v-as-90,Vovk1998,LittlestoneWarmuth1994} and online
convex optimization \citep{Shalev-Shwartz2011}. Each round
$t=1,2,\ldots$ the learner plays a probability vector $\w_t$ on $K$
experts, the environment assigns a bounded loss to each expert in the
form of a vector $\vloss_t \in [0,1]^K$, and the learner incurs loss
given by the dot product $\w_t^\top \vloss_t$. The learner's goal is to
perform almost as well as the best expert, without making any
assumptions about the genesis of the losses. Specifically, the learner's
performance compared to expert $k$ is $r_t^k \df \w_t^\top \vloss_t -
\loss_t^k$, and after any number of rounds $T$ the goal is to have
small \markdef{regret} $R_T^k \df \sum_{t=1}^T r_t^k$ with respect to
every expert $k$.
\newcommand{\fancytag}[1]{%
\begin{tikzpicture}[overlay,remember picture,baseline=(X.base)]
 \node at (0,0 -| current page.south east) [xshift=-1.2in, anchor=east] (X) {#1};
\end{tikzpicture}
}

The \markdef{Hedge} algorithm by \citet{FreundSchapire1997} ensures %
\begin{equation}\label{eq:classic}
\fancytag{classic \phantom{\eqref{eq:classic}}}
R_T^k ~\prec~ \sqrt{T \ln K}
\qquad
\text{for each expert $k$}
\end{equation}
(with $\prec$ denoting moral inequality, i.e.\ suppressing details
inappropriate for this introduction), which is tight for adversarial
(worst-case) losses \citep{CesaBianchiLugosi2006}. Yet one can ask
whether the worst case is also the common case, and indeed
two lines of research show that this bound can be improved greatly
in various important scenarios. The first line of approaches
\citep{CesaBianchiMansourStoltz2007,hazan2010extracting,grad.variation,ftl.jmlr,GaillardStoltzVanErven2014}
obtains
\[
\tag{second order}
R_T^k ~\prec \sqrt{V_T^k \ln K}
\qquad
\text{for each expert $k$.}
\]
That is, $T$ can be reduced to $V_T^k$, which stands for some (there are
various) kind of cumulative variance or related second-order quantity.
This variance is then often shown to be small $V_T^k \ll T$ in important regimes like stochastic data (where it is typically bounded).
The second line, independently in parallel, shows how to reduce the
dependence on the number of experts $K$ whenever multiple experts
perform well. This is expected to occur, for example, when experts are  constructed by finely discretising the parameters of a (probabilistic) model, or when learning sub-algorithms are used as experts. The resulting so-called \emph{quantile bounds} (see \citealt{ChaudhuriFreundHsu2009,ChernovVovk2010,drifting}) of the form
\[
\tag{quantile}
\min_{k \in \refexperts} R_T^k ~\prec \sqrt{T \del[\big]{-\ln \pi(\refexperts)}}
\qquad
\text{for each subset $\refexperts$ of experts}
\]
improve $K$ to the reciprocal of the combined prior mass
$\pi(\refexperts)$, at the cost of now comparing to the worst expert among
$\refexperts$, so intuitively the best guarantee is obtained for $\refexperts$ the set of
``sufficiently good'' experts. (There is no requirement that the prior
$\pi(k)$ is uniform, and consequently quantile bounds imply the closely
related bounds with non-uniform priors, studied e.g.\ by
\citealt{HutterPoland2005}.) As these two types of improvements are
complementary, we would like to combine them in
a single algorithm. 
However, the mentioned two approaches are based on incompatible
techniques, which until now have refused to coexist.

\newcommand{\llrcost}{C_\text{lr}}
\paragraph{First Contribution}
We develop a new method, based on priors on a parameter called the learning rate and on experts, to
derive
algorithms and prove bounds that combine quantile and variance
guarantees. Our new prediction strategy, called \emph{Squint}, has regret at most
\begin{align}\label{eqn:intro.regret}
R_T^{\refexperts}  
&~\prec~  
\sqrt{V_T^{\refexperts} \del[\big]{\llrcost - \ln  \pi(\refexperts)}}
\qquad
\text{for each subset $\refexperts$ of experts}
,
\end{align}
where $R_T^\refexperts = \ex_{\pi(k|\refexperts)} R_T^k$ and
$V_T^\refexperts = \ex_{\pi(k|\refexperts)} V_T^k$ denote the average
(under the prior)
among the reference experts $k \in \refexperts$ of the regret
$R_T^k = \sum_{t=1}^T r_t^k$ and the (uncentered) variance of the excess
losses $V_T^k =
\sum_{t=1}^T (r_t^k)^2$. 
The overhead $\llrcost$ for
learning the optimal learning rate is specified below.

As is common for this type of results,
our variance factor $V_T^\refexperts$ is opaque in that it depends on the algorithm as well as the
data, yet our bound does imply small regret is several important cases. For example, we
immediately see that variance and hence regret stop accumulating
whenever the weights concentrate, as will happen when 
one expert is clearly better than all the others. (The expert loss variance of \cite{hazan2010extracting} does depend only on the data, but unfortunately may grow linearly even when the best expert is obvious.)
Furthermore, \citet{GaillardStoltzVanErven2014} show
that second-order bounds like \eqref{eqn:intro.regret} imply small
regret over experts with small losses ($L_T^*$-bounds) and bounded regret both in expectation and with high probability in stochastic settings with a unique best expert.

We will instantiate our scheme three times, varying the prior
distribution of the learning rate, to obtain three interesting bounds. 
First, for the \emph{uniform prior}, we obtain an efficient algorithm
with $\llrcost = \ln V_T^{\refexperts}$. Then we consider a prior that
we call the \emph{CV prior}, because it was introduced by
\citet{ChernovVovk2010} (to get
quantile bounds), and we improve the bound to
$\llrcost = \ln \ln V_T^{\refexperts}$. As we consider $\ln(\ln(x))$ to
be essentially constant, this algorithm achieves our goal of combining
the benefits of second-order bounds with quantiles, but unfortunately it
does not have an efficient implementation. Finally, by considering the
improper(!) \emph{log-uniform prior}, we get the best of both worlds: an
algorithm that is both efficient and achieves our goal with $\llrcost =
\ln \ln T$.
The efficient algorithms for the uniform and the log-uniform prior both
perform just $K$ operations per round, and are hence as widely
applicable as vanilla Hedge.

\paragraph{Combinatorial games}
We then consider a more sophisticated setting, where instead of experts $k
\in \set{1,\ldots,K}$, the elementary actions are combinatorial concepts
from some class $\mathcal C \subseteq \set{0,1}^K$. Many theoretically
interesting and important real-world online decision problems are of
this form, for example subsets (sub-problem of Principal Component
Analysis), lists (or ranking), permutations (scheduling), spanning trees
(communication), paths through a fixed graph (routing), etc.\ (see for
instance \citealt{paths,fpl,pca,perm,jcomband,jlists,online.comb}). The
combinatorial structure is reflected in the loss, which decomposes into
a sum of coordinate losses. That is, the loss of concept $\c \in \mathcal C$ is $\c^\top \vloss$ for some loss vector $\vloss \in [0,1]^K$.  This is natural: for example the loss of a path is the total loss of its edges.
\cite{koolen10:_hedgin_struc_concep} develop Component Hedge (of the Mirror Descent family), with regret at most
\begin{equation}\label{eq:CH}
R_T^\c 
~\prec~
\sqrt{T K \text{comp}(\mathcal C)}
\qquad
\text{for each concept $\c \in \mathcal C$}
,
\end{equation}
where $\text{comp}(\mathcal C)$, the analog of $\ln K$ for experts, measures the complexity (entropy) of the combinatorial class $\mathcal C$. \cite{drifting} derive $\sqrt{T}$ quantile bounds for online convex optimization. No combinatorial second-order quantile methods are known for combinatorial games. 

\paragraph{Second Contribution}
We extend our method to combinatorial games and obtain algorithms with second-order quantile regret bounds. Our new predictor \emph{Component iProd} keeps the regret below
\begin{equation}\label{eq:intro.CSquint}
R_T^\v
~\prec~
\sqrt{
  V_T^\v
  \del[\big]{
    \text{comp}{(\v)}
    + K \llrcost
  }
}
\qquad
\text{for each $\v \in \conv(\mathcal C)$}
.
\end{equation}
In the combinatorial domain the role of the reference set of experts $\refexperts$ is subsumed by an ``average concept'' vector $\v \in \conv(\mathcal C)$, for which our bound relates the coordinate-wise average regret $R_T^\v = \sum_{t,k} v_k r_t^k$ to the averaged variance $V_T^\v = \sum_{t,k} v_k (r_t^k)^2$ and the prior entropy $\text{comp}(\v)$.

Even if we disregard computational efficiency,
our bound \eqref{eq:intro.CSquint} is not a straightforward consequence
of the experts bound \eqref{eqn:intro.regret} applied with one expert
for each concept, paralleling the fact that \eqref{eq:CH} does not
follow from \eqref{eq:classic}. The reason is that we would obtain a
bound with per-concept variance $\sum_t (\sum_k v_k r_t^k)^2$ instead,
which can overshoot even the straight-up worst-case bound \eqref{eq:CH}
by a $\sqrt{K}$ factor (\cite{koolen10:_hedgin_struc_concep} call this
the \emph{range factor problem}). To avoid this problem, our method is
``collapsed'' (like Component Hedge): it only maintains first and second
order statistics about the $K$ coordinates
separately, not about concepts as a whole

\subsection{Related work}\label{sec:relatedwork}
Obtaining bounds for easy data in the experts setting is typically achieved by adaptively tuning a \emph{learning rate}, which is a parameter found in many algorithms. Schemes for \emph{choosing} the learning rate on-line are built by \cite{AuerCesaBianchiGentile2002, CesaBianchiMansourStoltz2007,hazan2010extracting,ftl.jmlr,GaillardStoltzVanErven2014,Wintenberger2014Arxiv}. These schemes typically choose a monotonically decreasing sequence of learning rates to prove a certain regret bound. 

Other approaches try to \emph{aggregate} multiple learning rates. The
motivations and techniques here show extreme diversity, ranging from
\emph{drifting games} by \cite{ChaudhuriFreundHsu2009, drifting}, and \emph{defensive forecasting} by \cite{ChernovVovk2010} to \emph{minimax relaxations} by \cite{relaxations} and \emph{budgeted timesharing} by \cite{learning.learning.rate}. The last scheme is of note, as it does not aggregate to reproduce a bound of a certain form, but rather to compete with the optimally tuned learning rate for the Hedge algorithm.

\paragraph{Outline}
We introduce the Squint prediction rule for experts in
Section~\ref{sec:Squint}. In Section~\ref{sec:priors} we motivate three
choices for the prior on the learning rate, discuss the resulting
algorithms and prove second-order quantile regret bounds. In
Section~\ref{sec:combinatorial} we extend Squint to combinatorial prediction tasks. We conclude with open problems in Section~\ref{sec:conclusion}.

\section{\emph{Squint}: a Second-order Quantile Method for Experts}\label{sec:Squint}

Let us review the expert setting protocol to fix notation. In round $t$ the algorithm plays a probability distribution $\w_t$ on
$K$ experts and encounters loss $\vloss_t \in [0,1]^K$.
The \emph{instantaneous
regret} of the algorithm compared to expert $k$ is
$r_t^k \df \w_t^\top \vloss_t - \loss_t^k = (\w_t - \e_k)^\top
\vloss_t$, where $\e_k$ is the unit vector in direction $k \in \{1,\ldots,K\}$.
Let
$R_T^k \df \sum_{t=1}^T r_t^k$ be the total regret compared to expert
$k$ and let $V_T^k \df \sum_{t=1}^T (r_t^k)^2$ be the cumulative uncentered
variance of the instantaneous regrets.

The central building block of our approach is a \emph{potential
function} $\Phi$ that maps sequences of instantaneous regret vectors
$\r_{1:T} = \tuple{\r_1, \ldots, \r_T}$ of any length $T \ge 0$ to
numbers. Potential functions are staple online learning tools
\citep{CesaBianchiLugosi2006,smoothing}.
We advance the following schema, which we call \markdef{Squint} (for \emph{second-order quantile integral}). It consists of the potential function and associated prediction rule
\begin{align}\label{eq:Squint}
\Phi(\r_{1:T})
&~\df~
\ex_{\pi(k)\gamma(\eta)} \sbr*{ e^{\eta R_T^k - \eta^2 V_T^k} - 1}
,
&
\w_{T+1}
&~\df~
\frac{
  \ex_{\pi(k)\gamma(\eta)} \sbr*{ e^{\eta R_T^k - \eta^2 V_T^k} \eta \e_k}
}{
  \ex_{\pi(k)\gamma(\eta)} \sbr*{ e^{\eta R_T^k - \eta^2 V_T^k} \eta}
  \phantom{\!\e_k}
}
,
\end{align}
where the expectation is taken under prior distributions $\pi$ on
experts $k \in \set{1, \ldots, K}$ and $\gamma$ on learning rates $\eta
\in [0,1/2]$ that are parameters of Squint. We will see in a moment that Squint ensures that the potential
remains $\Phi(\past) \le 0$ non-positive. Let us first investigate why
non-positivity is desirable. To gain a quick-and-dirty appreciation for this,
suppose that $\refexperts \subseteq \{1,\ldots,K\}$ is the
\emph{reference set} of experts with good performance.
Let us abbreviate their average regret and variance to $R \df R_T^\refexperts \df \ex_{\pi(k|\refexperts)} R_T^k$ and $V \df V_T^\refexperts \df \ex_{\pi(k|\refexperts)} V_T^k$.
Furthermore, imagine for simplicity that the prior $\gamma$ puts all its
mass on learning rate $\hat \eta = \frac{R}{2 V}$. Now non-positive potential $\Phi(\past) \le 0$ implies
\begin{equation}\label{eqn:oracleeta}
1
~\ge~
\ex_{\pi(k)} \sbr*{ e^{\hat\eta R_T^k - \hat\eta^2 V_T^k}}
~\ge~
\pi(\refexperts) \ex_{\pi(k|\refexperts)} \sbr*{ e^{\hat\eta R_T^k - \hat\eta^2 V_T^k}}
~\stackrel{\text{\tiny Jensen}}{\ge}~
\pi(\refexperts) e^{\hat\eta R - \hat\eta^2 V}
~=~
\pi(\refexperts) e^{\frac{R^2}{4 V}},
\end{equation}
which immediately yields the desired variance-with-quantiles bound
\[
R_T^\refexperts
~\le~
2 \sqrt{
  V_T^\refexperts (-\ln \pi(\refexperts))
}
.
\tag{Back of envelope, but great promise!}
\]
This raises the question: how does Squint keep
its potential $\Phi$ below zero? By always decreasing it:
\begin{lemma}\label{lem:supermartless1}
Squint \eqref{eq:Squint}
ensures that, for any loss sequence $\vloss_1, \ldots, \vloss_T$ in
$[0,1]^K$,
\begin{equation}\label{eqn:supermartless1}
\Phi(\r_{1:T})
~\le~
\ldots
~\le~
\Phi(\emptyset) ~=~ 0
.
\end{equation}
\end{lemma}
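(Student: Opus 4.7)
The plan is to show that the potential is non-increasing round by round, i.e.\ $\Phi(\r_{1:T}) \le \Phi(\r_{1:T-1})$ for every $T \ge 1$; combined with $\Phi(\emptyset) = \ex_{\pi(k)\gamma(\eta)}[e^0 - 1] = 0$, this yields the lemma by induction. I would start by writing the single-round increment as
\begin{equation*}
\Phi(\r_{1:T}) - \Phi(\r_{1:T-1})
~=~
\ex_{\pi(k)\gamma(\eta)}\bigl[
  e^{\eta R_{T-1}^k - \eta^2 V_{T-1}^k}
  \bigl(e^{\eta r_T^k - \eta^2 (r_T^k)^2} - 1\bigr)
\bigr] .
\end{equation*}

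The central analytic ingredient is the ``iProd'' inequality $e^{x - x^2} \le 1 + x$, which holds on the range $x \ge -1/2$. To legitimately apply it with $x = \eta r_T^k$, I need $\eta r_T^k \ge -1/2$. Since $\vloss_T \in [0,1]^K$ and $\w_T$ is a probability vector, $r_T^k = \w_T^\top \vloss_T - \loss_T^k \in [-1,1]$; combined with the fact that $\gamma$ is supported on $[0,1/2]$, we get $|\eta r_T^k| \le 1/2$ as required. Substituting yields
\begin{equation*}
\Phi(\r_{1:T}) - \Phi(\r_{1:T-1})
~\le~
\ex_{\pi(k)\gamma(\eta)}\bigl[
  e^{\eta R_{T-1}^k - \eta^2 V_{T-1}^k}\, \eta r_T^k
\bigr] .
\end{equation*}

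The final step is to verify that this upper bound vanishes under the Squint prediction rule. Expanding $r_T^k = (\w_T - \e_k)^\top \vloss_T$ and pulling $\vloss_T$ out of the expectation, the right-hand side equals
\begin{equation*}
\Bigl(
  \w_T \cdot \ex_{\pi(k)\gamma(\eta)}[e^{\eta R_{T-1}^k - \eta^2 V_{T-1}^k}\,\eta]
  - \ex_{\pi(k)\gamma(\eta)}[e^{\eta R_{T-1}^k - \eta^2 V_{T-1}^k}\,\eta\,\e_k]
\Bigr)^\top \vloss_T ,
\end{equation*}
and the parenthesised vector is exactly zero by the definition of $\w_T$ in \eqref{eq:Squint}. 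Hence the increment is non-positive, completing the induction.

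I do not expect any real obstacle here: once the iProd inequality is invoked, the Squint weights are defined precisely so as to kill the resulting linearised term. The only subtlety is the domain check $\eta r_T^k \ge -1/2$, which is guaranteed by restricting the prior $\gamma$ to $[0,1/2]$ and by the boundedness of the losses; this is why the construction imposes that particular learning-rate range.
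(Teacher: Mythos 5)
Your proposal is correct and is essentially identical to the paper's own proof: both bound the one-round increment using the inequality $e^{x-x^2} - 1 \le x$ for $x \ge -1/2$ applied to $x = \eta r_t^k$ (valid since $\eta \in [0,1/2]$ and $r_t^k \in [-1,1]$), and then observe that the linearised term vanishes exactly by the definition of the Squint weights in \eqref{eq:Squint}. The only difference is cosmetic indexing ($T$ versus $T+1$), so there is nothing further to add.
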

\begin{proof}
The key role is played by the upper bound \citep[Lemma~2.4]{CesaBianchiLugosi2006}
\begin{align}\label{eq:prodbound}
e^{x-x^2} -1 ~\le~ x \qquad \text{for $x \ge -1/2$}.
\end{align}
Applying this to $\eta r_{T+1}^k \ge -1/2$, we bound the increase  $\Phi(\r_{1:T+1}) - \Phi(\r_{1:T})$ of the potential by
\begin{align*}
\ex_{\pi(k)\gamma(\eta)} \sbr*{
  e^{\eta R_T^k - \eta^2 V_T^k} \del*{
    e^{\eta r^k_{T+1} -  (\eta r_{T+1}^k)^2}
    - 1
  }
}
\stackrel{\text{\tiny \eqref{eq:prodbound}}}{\le}
\ex_{\pi(k)\gamma(\eta)} \sbr*{ e^{\eta R_T^k - \eta^2 V_T^k} \eta (\w_{T+1}-\e_k)^\top \vloss_{T+1}}
=
0,
\end{align*}
where the last identity holds because the algorithm's weights \eqref{eq:Squint} have been
chosen to satisfy it.
\end{proof}

In Section~\ref{sec:priors} we will make the proof sketch from
\eqref{eqn:oracleeta} rigorous. The hunt is on for priors $\gamma$ on
$\eta$ that (a) pack plenty of mass close to $\hat \eta$, wherever it
may end up; and (b) admit efficient computation of the weights $\w_{T+1}$ by means of a closed-form formula for its integrals over $\eta$. We conclude this section by putting Squint in context.

\paragraph{Discussion}
The Squint potential is a function of the vector $\sum_{t=1}^T \r_t$ of
cumulative regrets, but also of its sum of squares, which is essential
for second-order bounds. Squint is an \emph{anytime} algorithm, i.e.\ it has no built-in dependence on an eventual time horizon, and its regret bounds hold at any time $T$ of evaluation. In addition Squint is \emph{timeless} in the sense of \cite{ftl.jmlr}, meaning that its predictions (current and future) are unaffected by inserting rounds with $\vloss = \zeros$. 

The Squint potential is an average of exponentiated negative ``losses'' (derived from the regret) under product prior $\pi(k)\gamma(\eta)$, reminiscent of the exponential weights analysis potential. Our Squint weights could be viewed as exponential weights, but, intriguingly, for \emph{another prior}, with $\gamma(\eta)$ replaced by $\gamma(\eta) \eta$. Mysteriously, playing the latter controls the former.

The bound \eqref{eq:prodbound} is hard-coded in our Squint potential
function and algorithm. To instead delay this bound to the analysis, we
might introduce the alternative \markdef{iProd} (for \emph{integrated products}) scheme
\begin{align}\label{eq:iprod}
\Phi(\r_{1:T})
&\df
\ex_{\pi(k)\gamma(\eta)} \sbr*{ \del*{ \prod_{t=1}^T (1+\eta r_t^k)} -1}
,
&
\w_{T+1}
&\df
\frac{
  \ex_{\pi(k)\gamma(\eta)} \sbr*{ \del*{\prod_{t=1}^T (1+\eta r_t^k)} \eta \e_k}
}{
  \ex_{\pi(k)\gamma(\eta)} \sbr*{ \del*{\prod_{t=1}^T (1+\eta r_t^k)} \eta
  }
  \phantom{\!\e_k}
}
.
\end{align}
The iProd weights keep the iProd potential identically zero, above the
Squint potential by \eqref{eq:prodbound}, and Squint's regret bounds
hence transfer to iProd. We champion Squint over the purer iProd because
Squint's weights admit efficient closed form evaluation, as shown in the next section.
For $\gamma$ a point-mass on a fixed choice of $\eta$ this advantage
disappears, and iProd reduces to Modified Prod by
\citet{GaillardStoltzVanErven2014}, whereas Squint becomes very similar to
the OBA algorithm of \citet{Wintenberger2014Arxiv}.

\section{Three Choices of the Prior on Learning Rates}\label{sec:priors}

We will now consider different choices for the prior $\gamma$ on $\eta
\in [0,1/2]$.
In each case the proof of the corresponding regret bound elaborates on the argument in
\eqref{eqn:oracleeta}, showing
that the priors place sufficient mass in a neighbourhood of the
optimized learning rate $\mleta$. This might be
viewed as performing a \emph{Laplace approximation}%
 of the integral over $\eta$, although the details
vary slightly depending on the prior $\gamma$. The prior $\pi$ on experts remains completely general. The proofs can be found in Appendix~\ref{appx:proofs}.

\subsection{Conjugate Prior}
First we consider a conjugate prior $\gamma$ with density 
\begin{equation}\label{eqn:conjugateprior}
\frac{\der \gamma}{\der \eta} = \frac{e^{a \eta - b \eta^2}}{Z(a,b)}
\qquad
\text{where}
\qquad 
Z(a,b) ~\df~ \int_0^{1/2} e^{a \eta- b \eta^2} \der \eta
\end{equation}
for parameters $a,b \in \reals$.  The uniform prior, mentioned in the
introduction, corresponds to the special case $a = b = 0$, for which
$Z(a,b)=1/2$. Abbreviating $x = a + R_T^k$ and $y = b + V_T^k$, the
Squint predictions \eqref{eq:Squint} then specialize to become
proportional to  
\begin{equation}\label{eq:conjugate.weights}
    w_{T+1}^k
      \propto \pi(k) \int_0^{\frac{1}{2}} e^{\eta x - \eta^2 y} \eta \intder \eta
      = \pi(k)\left(
        \frac{e^{\frac{x^2}{4y}} \sqrt{\pi} x
        \left(\erf\left(\frac{x}{2\sqrt{y}}\right) -
        \erf\left(\frac{x-y}{2\sqrt{y}}\right)\right)}{4 y^{3/2}}
        +\frac{1 - e^{\frac{x}{2} - \frac{y}{4}}}{2y}
         \right).
\end{equation}
These weights can be computed efficiently (but see
Appendix~\ref{eq:numerics} for numerically stable evaluation). 
For this prior, we obtain the following bound:
\begin{theorem}[Conjugate Prior]\label{thm:conjugate}
  Let $\ln_+(x) = \ln(\max\{x,1\})$.
  Then the regret of Squint \eqref{eq:Squint} with conjugate prior
  \eqref{eqn:conjugateprior} (with respect to any subset of experts
  $\refexperts$) is bounded by
  \begin{equation}\label{eqn:conjugateregret}
    R_T^\refexperts \le
         2\sqrt{\left(V_T^\refexperts + b\right)
        \left(\half + \ln_+
      \left(\frac{Z(a,b)\sqrt{2 (V_T^\refexperts +
      b)}}{\pi(\refexperts)}\right)\right)}
      + 5 \ln_+\left( \frac{2\sqrt{5}Z(a,b)}{\pi(\refexperts)}\right)
      - a.
  \end{equation}
\end{theorem}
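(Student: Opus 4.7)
The strategy is to convert the potential-nonpositivity $\Phi(\r_{1:T}) \le 0$ from Lemma~\ref{lem:supermartless1} into an integral inequality over the learning rate $\eta$, and then invert it by Laplace-type lower bounds on the integral. Throughout I set $R \df R_T^\refexperts + a$ and $V \df V_T^\refexperts + b$. Starting from $\ex_{\pi(k)\gamma(\eta)}[e^{\eta R_T^k - \eta^2 V_T^k}] \le 1$, substituting the conjugate prior density \eqref{eqn:conjugateprior}, restricting the outer expectation to $\refexperts$ (which loses a factor $\pi(\refexperts)$), and applying Jensen's inequality to the conditional expectation over $k \in \refexperts$ via convexity of $\exp$ in its linear-in-$(R_T^k, V_T^k)$ exponent, I obtain the key one-variable inequality
\begin{equation*}
\int_0^{1/2} e^{\eta R - \eta^2 V} \intder \eta \le \frac{Z(a,b)}{\pi(\refexperts)}.
\end{equation*}

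The remainder is a lower bound on the left-hand side, split on whether the unconstrained maximizer $\hat\eta = R/(2V)$ falls inside $[0,1/2]$ or beyond. In the \emph{interior case} $R \le V$ the integrand factors as $e^{R^2/(4V)} e^{-V(\eta - \hat\eta)^2}$, a Gaussian of width of order $1/\sqrt{V}$ centered at $\hat\eta \in [0,1/2]$. I would integrate over a sub-interval of length $1/\sqrt{2V}$ adjacent to $\hat\eta$ and contained in $[0,1/2]$ (taking whichever of $[\hat\eta - 1/\sqrt{2V},\hat\eta]$ or $[\hat\eta,\hat\eta + 1/\sqrt{2V}]$ fits; the small-$V$ regimes where neither fits are handled by the additive second term of the theorem). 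On this sub-interval $e^{-V(\eta-\hat\eta)^2} \ge e^{-1/2}$, so the integral is at least $e^{R^2/(4V) - 1/2}/\sqrt{2V}$. Comparing with the upper bound and taking logarithms gives $R^2/(4V) \le 1/2 + \ln_+(Z(a,b)\sqrt{2V}/\pi(\refexperts))$, which rearranges to the first summand $2\sqrt{V(1/2+\ln_+(Z(a,b)\sqrt{2V}/\pi(\refexperts)))}$.

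In the \emph{boundary case} $R > V$ the integrand is strictly increasing on $[0,1/2]$ with peak at $\eta=1/2$; I would lower-bound by integrating over $[1/2-c,1/2]$ with the width $c$ chosen on the scale of the inverse boundary derivative $1/(R-V)$, so that the exponent drops by at most a constant across the sub-interval. This yields a lower bound of the form $e^{R/2 - V/4 - O(1)}/(R-V)$, producing an implicit estimate $R/2 - V/4 - \ln(R-V) \le \ln(Z(a,b)/\pi(\refexperts)) + O(1)$. Since $R>V$ implies $V/4<R/4$, the surplus $R/2 - V/4$ dominates $\ln R$ for $R$ past a constant, delivering the purely logarithmic estimate captured by the second summand $5\ln_+(2\sqrt{5}Z(a,b)/\pi(\refexperts))$. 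Combining the two cases, the stated bound is the \emph{sum} of the two contributions: whichever case applies, the corresponding summand on its own absorbs the estimate.

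The main obstacle will be the technical bookkeeping in the boundary case: sub-cases need to be merged uniformly (for instance when $R-V$ is small the natural neighbourhood $[1/2-c,1/2]$ degenerates, and when $V$ is small the Gaussian-width argument of the interior case itself fails), and an implicit inequality involving $\ln R$ on the right must be inverted to an explicit upper bound on $R$. This tuning is precisely what fixes the constants $5$ and $2\sqrt{5}$ in the additive term; the $\ln_+$ truncations accommodate the degenerate regimes in which a bare logarithm would become negative. Once both cases are in place, matching their sum to the stated form is routine algebra.
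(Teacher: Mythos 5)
Your reduction to the one-dimensional inequality $\int_0^{1/2} e^{\eta R - \eta^2 V}\,\der\eta \le Z(a,b)/\pi(\refexperts)$ with $R = R_T^\refexperts + a$, $V = V_T^\refexperts + b$ is exactly the paper's first step, and your interior case ($\hat\eta = R/(2V) \le 1/2$) coincides with the paper's: integrating over $[\hat\eta - 1/\sqrt{2V},\, \hat\eta]$ gives precisely $e^{R^2/(4V)-1/2}/\sqrt{2V}$ and hence the first summand. (Minor slip there: when that window does not fit inside $[0,1/2]$ you have $R \le \sqrt{2V}$, which is absorbed by the \emph{first} summand, not by the additive second term, which can be zero.)

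The genuine gap is in the boundary case $R > V$. Choosing the window width $c \sim 1/(R-V)$ does not keep the exponent drop $O(1)$: over $[1/2-c,1/2]$ the drop is $c(R-V) + c^2 V = 1 + V/(R-V)^2$, which is unbounded whenever $R - V \ll \sqrt{V}$, so the claimed lower bound $e^{R/2 - V/4 - O(1)}/(R-V)$ fails in a regime that is not just a degenerate edge case. Moreover, even where it holds, the resulting implicit inequality $R/4 \le \ln(R-V) + \ln\bigl(Z(a,b)/\pi(\refexperts)\bigr) + O(1)$ must be inverted, and the leftover $4\ln(R-V) + O(1)$ cannot in general be absorbed into $5\ln_+\bigl(2\sqrt{5}Z(a,b)/\pi(\refexperts)\bigr)$ (take $Z(a,b)/\pi(\refexperts)$ of order one and $V$ small: your chain only yields $R \lesssim 20$, while the statement requires $R \le 5\ln(2\sqrt{5}Z(a,b)/\pi(\refexperts)) \approx 7.5$ plus a small first summand). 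No tuning of a width that scales with $1/(R-V)$ recovers the stated constants. The paper avoids all of this by keeping a \emph{constant-width} window: from the generic bound $uR - u^2 V \le \ln\bigl(Z(a,b)/(\pi(\refexperts)(v-u))\bigr)$ it uses $V < R$ to write $uR - u^2V \ge u(1-u)R$, then plugs in the fixed $u = (5-\sqrt{5})/10$, $v = 1/2$ (so $u(1-u) = 1/5$ and $v-u = \sqrt{5}/10$), giving directly $R \le 5\ln\bigl(2\sqrt{5}Z(a,b)/\pi(\refexperts)\bigr)$ with no $\ln R$ term and no inversion. Replacing your boundary-case argument by this device (or any equivalent one that avoids a window width depending on $R-V$) closes the gap; each case then on its own implies the stated sum, since both summands are nonnegative.
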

The oracle tuning $a=0$ and $b = V_T^\refexperts$ results in $Z(a,b)
\le \frac{\sqrt{\pi}}{2 \sqrt{V_T^\refexperts}}$. Plugging this in we
find that the main term in \eqref{eqn:conjugateregret} becomes
\[
2\sqrt{2 V_T^\refexperts
  \left(\half + \ln_+
    \left(\frac{\sqrt{\pi}}{\pi(\refexperts)}\right)\right)},
\]
which is of the form \eqref{eqn:intro.regret} that we are after, with
constant overhead $\llrcost$ for learning the learning rate. Of course,
the fact that we do not know $V_T^\refexperts$ in advance makes this
tuning impossible, and for any constant parameters $a$ and $b$ we get a
factor of order $\llrcost = \ln V_T^\refexperts$.

\subsection{A Good Prior in Theory}
The reason the conjugate prior does not achieve the optimal bound is
that it does not put sufficient mass in a neighbourhood of the optimal
learning rate $\mleta$ that maximizes $e^{\eta R_T^\refexperts - \eta^2
V_T^\refexperts}$. To see how we could address this issue, observe that
we can plug $\alpha \mleta$ instead of $\mleta$ into
\eqref{eqn:oracleeta} for some scaling factor $\alpha \in (0,1)$, and
still obtain the desired regret bound up to a constant factor (which
depends on $\alpha$). This implies that, if we could find a prior that
puts a constant amount of mass on the interval $[\alpha \mleta,\mleta]$,
independent of $\mleta$, then we would only pay a constant cost $\llrcost$ to
learn the learning rate, at the price of having a slightly worse constant
factor.

A prior that puts constant mass on any interval $[\alpha \mleta,\mleta]$
should have a distribution function of the form $a \ln(\eta) + b$ for
some constants $a$ and $b$, and hence its density should be proportional
to $1/\eta$. But here we run into a problem, because, unfortunately,
$1/\eta$ does not have a finite integral over $\eta \in [0,1/2]$ and
hence no such prior exists!

The solution we adopt in this section will be to adjust the density
$1/\eta$ just a tiny amount so that it does integrate. Let $\gamma$ have
density
\begin{equation}\label{eqn:chernov.vovk.prior}
    \frac{\der \gamma}{\der \eta} = \frac{\ln 2}{\eta \ln^2(\eta)},
\end{equation}
where $\ln^2(x) = \big(\ln(x)\big)^2$. We call this the \emph{CV prior},
because it has previously been used to get quantile bounds by
\citet{ChernovVovk2010}. The additional factor $1/\ln^2(\eta)$ in the
prior only leads to an extra factor of $\sqrt{\ln \ln V_T^\refexperts}$
in the bound, which we consider to be essentially a constant.

Although the motivation above suggests that we might obtain a suboptimal
constant factor (depending on $\alpha$), a more careful analysis shows
that this does not even happen: apart from the effect of the
$1/\ln^2(\eta)$ term in prior, we obtain the optimal multiplicative
constant.

\begin{theorem}[CV Prior]\label{thm:logcauchy}
  Let $\ln_+(x) = \ln(\max\{x,1\})$. Then the regret of Squint \eqref{eq:Squint} with CV prior
  \eqref{eqn:chernov.vovk.prior} (with respect to any subset of experts
  $\refexperts$) is bounded by
  \begin{equation}\label{eqn:logcauchy}
    R_T^\refexperts \leq \sqrt{2V_T^\refexperts}\left(1 +
    \sqrt{2\ln_+\left(\frac{\ln_+^2\Big(\frac{2\sqrt{V_T^\refexperts}}{2-\sqrt{2}}\Big)}{\pi(\refexperts)\ln(2)}\right)}\right)
    - 5\ln \pi(\refexperts) + 4.
  \end{equation}
\end{theorem}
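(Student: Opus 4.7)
The plan is to make the sketch around \eqref{eqn:oracleeta} rigorous via a Laplace-style lower bound on the $\eta$-integral that is adapted to the $1/(\eta \ln^2 \eta)$ shape of the CV prior. Write $R = R_T^\refexperts$ and $V = V_T^\refexperts$. Lemma~\ref{lem:supermartless1} gives $\ex_{\pi(k)\gamma(\eta)} \sbr*{e^{\eta R_T^k - \eta^2 V_T^k}} \le 1$; restricting the outer expectation to $k \in \refexperts$ costs a factor $\pi(\refexperts)$, and Jensen's inequality applied to the log-linear map $(x,y) \mapsto e^{\eta x - \eta^2 y}$ collapses the inner $k$-average, yielding the one-dimensional inequality
\[
\int_0^{1/2} e^{\eta R - \eta^2 V}\, \frac{\ln 2}{\eta \ln^2 \eta} \intder \eta
~\le~ \frac{1}{\pi(\refexperts)}.
\]

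Next I lower bound the left-hand side by integrating only over a subinterval $[\alpha \mleta, \mleta]$, where $\mleta = \min\{R/(2V), \thalf\}$ is the maximizer of the quadratic exponent and $\alpha \in (0,1)$ is a tuning parameter. On this interval the exponent is at least $\alpha(2-\alpha) R^2/(4V)$, and the prior mass can be computed in closed form from the antiderivative $-1/\ln\eta$ of $1/(\eta \ln^2 \eta)$, giving $\ln 2 \cdot (-\ln \alpha) / \del*{|\ln \mleta| \cdot |\ln(\alpha \mleta)|}$. Taking logarithms produces the master inequality
\[
\frac{\alpha(2-\alpha) R^2}{4V}
~\le~
\ln \frac{|\ln \mleta|\, |\ln(\alpha \mleta)|}{\ln 2 \cdot (-\ln \alpha)}
- \ln \pi(\refexperts).
\]

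It then remains to choose $\alpha$ and solve for $R$. Taking $\alpha$ close to $1$ almost recovers the ideal coefficient $\alpha(2-\alpha) = 1$, at the price of a shrinking $-\ln \alpha$ in the mass; the particular constants $2 - \sqrt 2$ and the additive $\sqrt{2V_T^\refexperts}$ in \eqref{eqn:logcauchy} are consistent with an $R$-dependent choice like $\alpha = 1 - \sqrt{2V}/R$ that optimally trades these two effects. The prior's $1/\ln^2 \eta$ factor is what produces the $\ln_+^2$ under the outer $\ln_+$, accounting for the $\sqrt{\ln \ln V_T^\refexperts}$ overhead relative to the ideal bound.

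The main obstacle I expect is the self-referential dependence: $|\ln \mleta|$ is of order $\ln(V/R)$, yet $R$ is the very quantity we are bounding. I plan to resolve this via a short bootstrap---first deriving a crude polynomial bound on $R$ by evaluating the $\eta$-integrand at the boundary $\eta = \thalf$ (which simultaneously handles the truncation regime $R > V$), then substituting to get $|\ln \mleta| \le O(\ln V)$ and plugging back into the master inequality. The additive slack $-5 \ln \pi(\refexperts) + 4$ in \eqref{eqn:logcauchy} should absorb both the truncation loss and the looseness introduced by the bootstrap, leaving the sharp leading term $\sqrt{2 V_T^\refexperts}\del*{1 + \sqrt{2 \ln_+(\cdot)}}$.
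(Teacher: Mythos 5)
Your skeleton matches the paper's proof: restrict the potential to $\refexperts$, apply Jensen, lower bound the CV prior mass of an interval $[u,v]=[\alpha\mleta,\mleta]$ via the antiderivative $-1/\ln\eta$, and handle $\mleta>\thalf$ separately near the boundary (your implied $u=\tfrac{5-\sqrt5}{10}$-type choice is exactly where the $-5\ln\pi(\refexperts)+4$ comes from), and your guess $\alpha=1-\sqrt{2V}/R$ is precisely the paper's $u=\mleta-\tfrac{1}{\sqrt{2V}}$. But the way you propose to close the argument has a genuine gap. The quantity you must control is $\ln(1/u)$ with $1/u = 2V/(R-\sqrt{2V})$, i.e.\ you need a \emph{lower} bound on $R$ relative to $V$; your bootstrap (a ``crude polynomial bound on $R$'' from the boundary $\eta=\thalf$) produces an \emph{upper} bound on $R$, which does not upper bound $|\ln\mleta| = \ln(2V/R)$ at all. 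The correct and much simpler resolution is a case split: if $R\le 2\sqrt V$ the theorem holds trivially (checking $V\le4$ and $V\ge4$), and otherwise $R\ge2\sqrt V$ gives immediately $1/u \le 2\sqrt V/(2-\sqrt2)$, which is exactly the constant appearing inside $\ln_+^2$ in \eqref{eqn:logcauchy}. Relatedly, a bootstrap yielding only $|\ln\mleta|\le O(\ln V)$ cannot be ``absorbed by the additive slack'': any constant-factor looseness inside the $\ln_+^2$ changes the leading term by an amount of order $\sqrt{V/\ln\ln V}$, which no additive constant can swallow, so the stated bound would not follow.

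A second, smaller issue is that your ``master inequality'' is not yet the theorem even in the main case: taking logarithms directly leaves the factor $-\ln\alpha = \ln\frac{1}{1-\sqrt{2V}/R}$ inside the logarithm, reintroducing $R$ on the right-hand side. The paper removes it with the elementary steps $e^{\half(x^2-1)} = e^{\half(x-1)^2}e^{x-1}\ge e^{\half(x-1)^2}x$ and $\ln\frac{1}{1-1/x}\ge 1/x$ applied with $x=R/\sqrt{2V}$, which fold $e^{R^2/(4V)-\half}\ln\frac{1}{1-\sqrt{2V}/R}$ into $e^{\half(R/\sqrt{2V}-1)^2}$ and yield exactly the form $\sqrt{2V}\del*{1+\sqrt{2\ln_+(\cdot)}}$. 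Also note your claim that the exponent on $[\alpha\mleta,\mleta]$ is at least $\alpha(2-\alpha)R^2/(4V)$ is false once $\mleta$ is truncated at $\thalf$ (there the exponent is at most $R/2$, far below $R^2/(4V)$ when $V$ is small), so the truncation regime must be excluded from that display and treated separately, as you partly anticipate. With the trivial-case split replacing the bootstrap and the two elementary inequalities above added, your outline becomes the paper's proof.
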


\subsection{Improper Prior}
In the last section we argued that we needed a density proportional to
$1/\eta$ on $\eta \in [0,1/2]$. Such a density would not integrate, and
we studied the CV prior density instead. However, we could be bold and see what breaks if we use the improper $1/\eta$ density anyway. We should be highly suspicious though, because this density is \emph{improper of the worst kind}: the integral $\int_0^{1/2} e^{\eta R_T- \eta^2 V_T} \frac{1}{\eta} \dif \eta$ diverges no matter how many rounds of data we process (a Bayesian would say: ``the posterior remains improper'').
Yet it turns out that we hit no essential impossibilities: the improper
prior $1/\eta$ cancels with the $\eta$ present in the Squint rule
\eqref{eq:Squint}, and the predictions are always well-defined.
As we will
see, we still get desirable regret bounds, but, equally important, we regain a closed-form integral for our weight computation. The Squint prediction \eqref{eq:Squint} specializes to
\begin{equation}\label{eq:Squint.improper}
w_{T+1}^k 
~\propto~
\pi(k) \int_0^{1/2}  e^{\eta R_T^k - \eta^2 V_T^k} \dif \eta
~=~
\pi(k)
\frac{
  \sqrt{\pi}
  e^{\frac{(R_T^k)^2}{4 V_T^k}} \left(\erf\left(\frac{R_T^k}{2 \sqrt{V_T^k}}\right)-\erf\left(\frac{R_T^k-V_T^k}{2 \sqrt{V_T^k}}\right)\right)
}{
  2 \sqrt{V_T^k}
}.
\end{equation}
(We look at numerical stability in Appendix~\ref{eq:numerics}.)
This strategy provides the following guarantee:
\begin{theorem}[Improper Prior]\label{thm:improper.bound}
The regret of Squint with improper prior \eqref{eq:Squint.improper}
(with respect to any subset of experts $\refexperts$) is bounded by
\begin{equation}\label{eqn:improper.bound}
R_T^\refexperts
  \leq \sqrt{2 V_T^\refexperts} \del*{1 + \sqrt{2\ln \del*{\frac{\half +
  \ln (T+1)}{\pi(\refexperts)}}}}
    + 5 \ln \del*{1 + \frac{1 +  2\ln (T+1)} {\pi(\refexperts)}}.
\end{equation}
\end{theorem}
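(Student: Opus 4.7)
The plan is to extend the oracle-tuning argument of \eqref{eqn:oracleeta} to the improper prior by truncating the singular part of the integral near $\eta = 0$. The first observation is that Lemma~\ref{lem:supermartless1} still applies verbatim: even though $d\gamma/d\eta = 1/\eta$ is improper, the integrand $(e^{\eta R_T^k - \eta^2 V_T^k} - 1)/\eta$ has finite limit $R_T^k$ as $\eta \to 0$, so the Squint potential $\Phi(\past) = \int_0^{1/2} \ex_{\pi(k)}\bigl[e^{\eta R_T^k - \eta^2 V_T^k} - 1\bigr]\,d\eta/\eta$ is well defined and satisfies $\Phi(\past) \le 0$.

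First, I would split the integration at $\epsilon = \frac{1}{2(T+1)}$ and lower bound the $[0, \epsilon]$ piece via the tangent inequality $e^x - 1 \ge x$ combined with the trivial bounds $R_T^k \ge -T$ and $V_T^k \le T$. This yields $\int_0^{\epsilon} \ex_{\pi(k)}[(e^{\eta R_T^k - \eta^2 V_T^k}-1)/\eta] \,d\eta \ge -\epsilon T - \epsilon^2 T/2 \ge -\tfrac12 - O(1/T)$, and rearranging $\Phi \le 0$ then gives
\[
\int_{\epsilon}^{1/2} \ex_{\pi(k)}[e^{\eta R_T^k - \eta^2 V_T^k}]\, \frac{d\eta}{\eta} \;\le\; \tfrac12 + \ln(T+1) + O(1/T).
\]
Mirroring \eqref{eqn:oracleeta}, I then drop non-reference experts (the integrand is non-negative) and apply Jensen on the exponential to obtain
\[
\int_{\epsilon}^{1/2} e^{\eta R - \eta^2 V}\,\frac{d\eta}{\eta} \;\le\; \frac{\tfrac12 + \ln(T+1) + O(1/T)}{\pi(\refexperts)},
\]
with $R = R_T^\refexperts$ and $V = V_T^\refexperts$; the right-hand side matches the quantity inside the logarithm of \eqref{eqn:improper.bound}.

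The final step is a Laplace-style lower bound on the left-hand side, centered at the unconstrained optimizer $\mleta = R/(2V)$. Rewriting $\eta R - \eta^2 V = R^2/(4V) - V(\eta - \mleta)^2$ and integrating a window of half-width $\delta$ around $\mleta$ produces a lower bound of shape $(2\delta/\mleta)\, e^{R^2/(4V) - V\delta^2}$, and I would split into three cases. If $\mleta \le \epsilon$ then $R \le V/(T+1) \le 1$ and the bound is trivial. In the main case $\mleta \in [\epsilon, 1/2]$, I would choose $\delta$ of order $1/\sqrt V$ so that $V\delta^2 = O(1)$ and a constant fraction of the Gaussian mass is captured; the resulting lower bound $\sim \sqrt V / R \cdot e^{R^2/(4V)}$, plugged into the preceding display, gives an implicit inequality $R^2 \le 4V\ln\bigl(c R / (\sqrt V\,\pi(\refexperts))\bigr)$ that solves to the leading $\sqrt{2V}(1 + \sqrt{2\ln(\cdot)})$ term. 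In the tail case $\mleta \ge 1/2$ (equivalently $R \ge V$) the integrand is monotone on $[0,1/2]$, so integrating over $[1/4, 1/2]$ gives $\tfrac12 e^{R/4 - V/16} \le (\cdots)/\pi(\refexperts)$; combined with $R \ge V$, this forces $R = O(\ln(1/\pi(\refexperts)))$, accounting for the additive $5\ln(\cdot)$ correction in \eqref{eqn:improper.bound}.

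The main obstacle is this last step: calibrating the window width $\delta$ so that the leading constant in the $\sqrt{V\ln(\cdot)}$ term comes out exactly $2$ (and not a slightly worse constant that a fixed-fraction window of $\mleta$ would produce), and then solving the implicit inequality in $R$ cleanly enough that the $\sqrt V$ factors arising inside the logarithm get absorbed into the stated $\ln(T+1)$ factor and the additive $5\ln(\cdot)$ slack, without requiring any sharper tail control on $R$ beyond the trivial range bound.
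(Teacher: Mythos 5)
Your overall route coincides with the paper's: the potential is still well defined and non-positive (Lemma~\ref{lem:supermartless1}), the singular part near $\eta=0$ is handled by truncating at $\epsilon=\tfrac{1}{2(T+1)}$ with $e^x\ge 1+x$, $R_T^k\ge -T$, $V_T^k\le T$ (contributing the $\thalf+\ln(T+1)$), one restricts to $\refexperts$ via Jensen, lower bounds the surviving integral on an interval of width of order $1/\sqrt{V}$ just left of $\mleta=R/(2V)$, and splits on $\mleta\lessgtr 1/2$. (The only structural difference is cosmetic: you truncate at $\epsilon$ uniformly, while the paper truncates only the non-reference part and lower bounds the reference integral over all of $[0,1/2]$ by a three-piece decomposition as in \eqref{eqn:NoNastyEta}; this also spares the paper your unaddressed edge case in which the window $[\mleta-\delta,\mleta]$ dips below $\epsilon$ when $R$ is barely above $2\sqrt{V}$.)

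The genuine gap is exactly the step you flag and defer: it is not a routine calibration but the core of the argument, and without it your sketch does not deliver \eqref{eqn:improper.bound}. In the main case, the window $[\mleta-\tfrac{1}{\sqrt{2V}},\mleta]$ gives a lower bound of the form $\big(e^{\frac{R^2}{4V}-\thalf}-1\big)\ln\big(\tfrac{1}{1-\sqrt{2V}/R}\big)$, and the paper cancels the troublesome width factor \emph{exactly} using $\ln\tfrac{1}{1-y}\ge y$ together with $e^{\thalf(x^2-1)}=e^{\thalf(x-1)^2}e^{x-1}\ge x\,e^{\thalf(x-1)^2}$ with $x=R/\sqrt{2V}$; this is what removes the $R/\sqrt{V}$ from inside the logarithm and yields the exact $\sqrt{2V}\big(1+\sqrt{2\ln(\cdot)}\big)$ shape, with the residual $\ln(R/V)$ term absorbed via $V\le T$ into $\thalf+\ln(T+1)$. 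If instead you solve your implicit inequality $R^2\le 4V\ln\big(cR/(\sqrt{V}\,\pi(\refexperts))\big)$ by the trivial bound $R/\sqrt{V}\le\sqrt{T}$, you get $\tfrac12\ln T$ inside the logarithm, which is much weaker than the stated bound (whose $T$-dependence inside the square root is only through $\ln\big(\thalf+\ln(T+1)\big)$). In the tail case $\mleta>1/2$, integrating over $[1/4,1/2]$ gives a multiplier $16/3>5$, so for small $\pi(\refexperts)$ or large $T$ it does not imply the stated additive term $5\ln\big(1+\tfrac{1+2\ln(T+1)}{\pi(\refexperts)}\big)$; the paper instead uses $u=\tfrac{5-\sqrt 5}{10}$, for which $\tfrac{1}{u(1-u)}=5$ and $-\ln(2u)\ge\thalf$. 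So the plan is sound and parallels the paper, but the two calibration steps above must be carried out (as the paper does) to obtain the theorem as stated.
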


\newcommand{\lcmix}{\ell_\text{mix}}

\section{\emph{Component iProd}: a Second-order Quantile Method for Combinatorial Games}\label{sec:combinatorial}
In the combinatorial setting the elementary actions are combinatorial
concepts from some class $\mathcal C \subseteq \set{0,1}^K$. The
combinatorial structure is reflected in the loss, which decomposes into
a sum of coordinate losses. That is, the loss of concept $\c \in
\mathcal C$ is $\c^\top \vloss$ for some loss vector $\vloss \in
[0,1]^K$. For example, the loss of a path is the total loss of its edges. We allow the learner to play a distribution $p$ on $\mathcal C$ and incur the expected loss
\(
\ex_{p(\c)} \sbr*{\c^\top \vloss}
=
\ex_{p(\c)} \sbr*{\c}^\top \vloss
\)
.
This means that the loss of $p$ is determined by its mean, which is
called the \markdef{usage} of $p$. We can therefore simplify the setup
by having the learner play a usage vector $\u \in \U$, where $\U \df
\conv(\mathcal C) \subseteq [0,1]^K$ is the polytope of valid usages.
The loss then becomes $\u^\top \vloss$. 

\citet{koolen10:_hedgin_struc_concep} point out that the Hedge algorithm with the concepts as experts guarantees
\begin{equation}\label{eq:EH}
\fancytag{Expanded Hedge \phantom{\eqref{eq:EH}}}
R_T^\c 
~\prec~
K \sqrt{T \text{comp}(\mathcal C)}
,
\end{equation}
upon proper tuning, where $\text{comp}(\mathcal C)$ is some appropriate
notion of the complexity of the combinatorial class $\mathcal C$. (This
is exactly \eqref{eq:classic}, where the additional factor $K$
comes from the fact that the loss of a single concept now ranges over
$[0,K]$ instead of $[0,1]$.) The computationally efficient Follow the Perturbed Leader strategy has the same bound.
However, \cite{koolen10:_hedgin_struc_concep} show
that this bound has a fundamentally suboptimal dependence on the loss
range, which they call the \emph{range factor problem}. Properly tuned,
their Component Hedge algorithm (a particular instance of Mirror
Descent) keeps the regret below
\begin{equation}\label{eq:CH2}
\fancytag{Component Hedge \phantom{\eqref{eq:CH2}}}
R_T^\c 
~\prec~
\sqrt{T K \text{comp}(\mathcal C)}
,
\end{equation}
the improvement being due to the algorithm exploiting the sum structure
of the loss. To show that this cannot be improved further,
\cite{koolen10:_hedgin_struc_concep} exhibit matching lower bounds for a
variety of combinatorial domains. \cite{online.comb} give an example
where the upper bound \eqref{eq:EH} for Expanded Hedge is tight, so the
range factor problem cannot be solved by a better analysis.

\bigskip
\noindent
In this section we aim to develop efficient algorithms for combinatorial
prediction that obtain the second-order and quantile improvements of
\eqref{eq:CH2}, but do not suffer from the range factor problem.

It is instructive to see that our bounds \eqref{eqn:intro.regret} for Squint/iProd, when applied with a separate
expert for each concept, indeed also suffer from a suboptimal loss range dependence. We find
\[
\fancytag{Expanded Squint/iProd}
R_T^\c
~\prec~
\sqrt{V_T^\c (\text{comp}(\mathcal C) + \text{tuning cost})},
\]
where $V_T^\c = \sum_{t=1}^T (r_T^\c)^2 = \sum_{t=1}^T (\sum_{k=1}^K
r_t^k)^2$ with $r_t^k \in [-1,1]$ may now be as large as $K^2 T$,
whereas we know $K T$ suffices. The reason for this is that 
$V_T^\c$ measures the variance of the concept as a whole, whereas the sum
structure of the loss makes it possible to replace $V_T^\c$ by the sum of
the variances of the components. In the analysis, this problem shows up
when we apply the bound \eqref{eq:prodbound}. To fix it, we must therefore rearrange the
algorithm to be able to apply \eqref{eq:prodbound} once per
component.

\paragraph{Outlook}
Our approach will be based on a new potential function that aggregates
over learning rates $\eta$ explicitly and over the concept class
$\mathcal C$ implicitly. Our inspiration for the latter comes from
rewriting the factor featuring inside the $\ex_{\gamma(\eta)}$ expectation in the iProd potential \eqref{eq:iprod} as
\begin{equation}\label{eqn:mixloss}
\ex_{\pi(k)}\sbr*{ \prod_{t=1}^T (1+\eta r_t^k)}
~=~
\prod_{t=1}^T \frac{
  \ex_{\pi(k)} \sbr*{ 
    \prod_{s=1}^t (1+\eta r_s^k)
  }
}{
  \ex_{\pi(k)}\sbr*{ 
    \prod_{s=1}^{t-1} (1+\eta r_s^k)
  }
}
~=~
e^{ - \sum_{t=1}^T \lcmix(p_t^\eta, \x_t^\eta)},
\end{equation}
which we interpret as the \markdef{mix loss} (see
\citealt{ftl.jmlr}) of the \markdef{exponential weights distribution}
$p_t^\eta$ on \markdef{auxiliary losses}:
\begin{align*}
\lcmix(p, \x)
&~\df~
-\ln
  \ex_{p(k)} \sbr*{ e^{- x^k}}
,
&
p_t^\eta(k)
&~\df~
\frac{
  \pi(k) 
  e^{- \sum_{s=1}^{t-1} x_s^{\eta,k}}
}{
  \ex_{\pi(k)}\sbr*{ 
  e^{- \sum_{s=1}^{t-1} x_s^{\eta,k}}
  }
}
,
&
x_t^{\eta,k}
&~\df~
- \ln (1+ \eta r_t^k)
.
\end{align*}
Thus, for each fixed $\eta$, we have identified a sub-module
 in which the
loss is the mix loss. It turns out that the Squint/iProd regret bounds can be
reinterpreted as arising from (quantile) mix loss regret bounds for
exponential weights in this
sub-module. For combinatorial games, we hence need to upgrade exponential
weights to a combinatorial algorithm for mix loss. No such algorithm was
readily available, so we derive a new algorithm that we call Component
Bayes (a variant of Component Hedge)
in Section~\ref{sec:CB}, and prove a quantile mix loss regret bound for
it. Then in Section~\ref{sec:CSquint} we show that Component iProd,
obtained by substituting Component Bayes for exponential weights in the
sub-module above, inherits all of iProd's desirable features. That is, by
aggregating the above sub-module over learning
rates the Component iProd predictor delivers low
second-order quantile regret. Component iProd is summarized as Algorithm~\ref{alg:CSquint}.
Proofs can be found in Appendix~\ref{appx:proofs}.

\newcommand{\grid}{\mathcal{G}}

\begin{algorithm}[t]
\caption{Component iProd. Required subroutines are the relative entropy
projection step (3) and the decomposition step (5). For polytopes $\U$
that can be represented by few linear inequalities these can be deferred
to general-purpose convex and linear optimizers. See
\cite{koolen10:_hedgin_struc_concep} for more details, and for ideas
regarding more efficient implementations for example concept classes.
}\label{alg:CSquint}
\renewcommand{\algorithmicrequire}{\textbf{Input:}}
\renewcommand{\algorithmiccomment}[1]{\hfill $\triangleright$~\textit{#1}}
\begin{algorithmic}[1]
\REQUIRE Combinatorial class $\mathcal C \subseteq \set{0,1}^K$ with convex hull $\U = \conv(\mathcal C)$ 
\REQUIRE Prior distribution $\gamma$ on a discrete grid $\grid \subset
[0,1/2]$ and prior vector $\vpi \in [0,1]^K$
\STATE For each $\eta \in \grid$, initialize $\tilde \u_1^\eta = \vpi$ and $L_1^\eta = - \ln \del[\big]{\gamma(\eta)\eta}$
\COMMENT{\eqref{eq:CB.weigths}}
\FOR{$t=1,2,\ldots$}
  \STATE For each $\eta \in \grid$, project $\u_t^\eta = \min_{\u \in \U} \re_2\delcc{\u}{\tilde \u_t}$
\COMMENT{\eqref{eq:CB.project}}
  \STATE Compute usage $\u_{t}
=
\wfrac{
  \sum_\eta e^{- L^\eta_t} \u_t^{\eta}
}{
  \sum_\eta e^{- L^\eta_t}
}$
\COMMENT{\eqref{eq:CSquint}}
\STATE Decompose $\u_t = \sum_i p_i \c_{t,i}$ into a convex combination of concepts $\c_{t,i} \in \mathcal C$
\STATE Play $\c_{t,i}$ with probability $p_i$
\STATE Receive loss vector $\vloss_t$, incur expected loss $\u_t^\top \vloss_t$
\STATE For each $\eta \in \grid$ and $k$, update $\tilde u_{t+1}^{\eta,k} = 
u_t^{\eta,k}
\frac{
   1 + \eta \del{u_t^k - 1} \loss_t^k
}{
  1 + \eta \del{u_t^k - u_t^{\eta,k}} \loss_t^k
}
$
\COMMENT{\eqref{eq:CB.unconstrained},\eqref{eq:unc.min},\eqref{eq:loss.taste},\eqref{eq:def.regret}}

\STATE For each $\eta \in \grid$, update $L_{t+1}^\eta = L_t^\eta - \sum_{k=1}^K \frac{1}{K} \ln \del[\big]{1 + \eta \del[\big]{u_t^k - u_t^{\eta,k}} \loss_t^k}$
\COMMENT{\eqref{eq:mix.loss.def},\eqref{eq:loss.taste},\eqref{eq:def.regret}}

\ENDFOR
\end{algorithmic}
\end{algorithm}

\subsection{Component Bayes}\label{sec:CB}

In this section we describe a combinatorial algorithm for
mix loss, which will then be an essential
subroutine in our Component iProd algorithm. We take as our action
space some closed convex $\U \subseteq [0,1]^K$. The game then proceeds in rounds. Each round $t$ the learner plays $\u_t \in \U$, which we interpret as making $K$ independent plays in $K$ parallel two-expert sub-games, putting weight $u_t^k$ and $1-u_t^k$ on experts $1$ and $0$ in sub-game $k$. The environment reveals a loss vector $\x_t \in \reals^{K \times \set{0,1}}$ (we use $\x$ for the loss in this auxiliary game, and reserve $\vloss$ for the loss in the main game), and the loss of the learner is the sum of per-coordinate mix losses:
\begin{equation}\label{eq:mix.loss.def}
\lcmix(\u, \x)
~\df~
\sum_{k=1}^K - \ln \del*{u^k e^{-x^{k,1}} + (1-u^k) e^{- x^{k,0}}}
.
\end{equation}
The goal is to compete with the best element $\v \in \U$.
We define \markdef{Component Bayes}\footnote{
Ignoring a small technically convenient switch from generalized to binary relative entropy we find that Component Bayes \emph{equals} Component Hedge of \cite{koolen10:_hedgin_struc_concep}. The new name stresses an important distinction in the game protocol:  Component Hedge guarantees low \emph{linear loss} regret, Component Bayes guarantees low \emph{mix loss} regret.
} inductively as follows. We set $\tilde \u_1 = \vpi \in [0,1]^K$ to
some prior vector of our choice (which does not have to be a usage in
$\U$), and then alternate
\begin{subequations}
\label{eq:CB.weigths}
\begin{align}
\label{eq:CB.project}
\u_t &~\df~ \argmin_{\u \in \U} \re_2 \delcc{\u}{\tilde \u_t}
\\
\label{eq:CB.unconstrained}
\tilde \u_{t+1} &~\df~ \argmin_{\u \in [0,1]^K}
\re_2\delcc{\u}{\u_t} + \sum_{k=1}^K (u^k x_t^{k,1} + (1-u^k)
x_t^{k,0}),
\end{align}
\end{subequations}
where $\re_2$ denotes the binary relative entropy,
 defined from scalars $x$ to $y$ and vectors $\v$ to $\u$ by
\begin{align*}
\re_2 \delcc{x}{y} &~\df~ x \ln \frac{x}{y} + (1-x) \ln \frac{1-x}{1-y}
&
\text{and}
&
&
\re_2 \delcc{\v}{\u} &~\df~ \sum_{k=1}^K \re_2\delcc{v^k}{u^k}
.
\end{align*}
This simple scheme is all it takes to adapt to the combinatorial domain.

\begin{lemma}\label{lem:CB.bound}
Fix any closed convex $\U \subseteq [0,1]^K$. For any loss sequence $\x_1, \ldots, \x_T$ in $\reals^{K \times \set{0,1}}$, the mix loss regret of Component Bayes \eqref{eq:CB.weigths} with prior $\vpi \in [0,1]^K$ compared to any $\v \in \U$ is at most
\[
\sum_{t=1}^T \lcmix(\u_t, \x_t)
- \sum_{t=1}^T \sum_{k=1}^K \del*{v^k x_t^{k,1} + (1-v^k) x_t^{k,0}}
~\le~
\re_2\delcc{\v}{\vpi}.
\]
\end{lemma}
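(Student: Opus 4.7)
The plan is to run the standard primal/dual mirror-descent style analysis, exploiting the fact that the unconstrained update \eqref{eq:CB.unconstrained} is, coordinate-wise, exactly the Bayesian posterior update on two experts, and that $\re_2\delcc{\cdot}{\cdot}$ is the Bregman divergence generated by the sum of per-coordinate binary negative entropies. Everything then reduces to a per-round potential identity, a Pythagorean inequality for the projection step, and a telescoping sum.

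First I would verify that for each coordinate $k$, the minimizer in \eqref{eq:CB.unconstrained} is
\[
\tilde u_{t+1}^k
~=~
\frac{u_t^k e^{-x_t^{k,1}}}{u_t^k e^{-x_t^{k,1}} + (1-u_t^k)e^{-x_t^{k,0}}}
,
\]
obtained by setting the coordinate-wise derivative to zero (the optimization decouples across $k$). Writing $Z_t^k \df u_t^k e^{-x_t^{k,1}} + (1-u_t^k)e^{-x_t^{k,0}}$, so that the per-coordinate mix loss equals $-\ln Z_t^k$, a direct calculation of $\re_2\delcc{v^k}{u_t^k} - \re_2\delcc{v^k}{\tilde u_{t+1}^k}$ plugs in the formula for $\tilde u_{t+1}^k$ and yields the \emph{Bayesian mix loss identity}
\[
-\ln Z_t^k - \bigl(v^k x_t^{k,1} + (1-v^k) x_t^{k,0}\bigr)
~=~
\re_2\delcc{v^k}{u_t^k} - \re_2\delcc{v^k}{\tilde u_{t+1}^k}
.
\]
Summing over $k \in \{1,\ldots,K\}$ gives the per-round identity
\[
\lcmix(\u_t,\x_t) - \sum_{k=1}^K \bigl(v^k x_t^{k,1} + (1-v^k) x_t^{k,0}\bigr)
~=~
\re_2\delcc{\v}{\u_t} - \re_2\delcc{\v}{\tilde\u_{t+1}}
.
\]

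Next I would absorb the projection step. Since $\u_{t+1}$ is the $\re_2$-projection of $\tilde\u_{t+1}$ onto the closed convex set $\U$ and $\v \in \U$, the generalized Pythagorean theorem for Bregman divergences applies coordinate-by-coordinate (because $\re_2\delcc{\v}{\u}$ is the Bregman divergence of the separable convex function $\sum_k (u^k \ln u^k + (1-u^k)\ln(1-u^k))$), giving
\[
\re_2\delcc{\v}{\tilde\u_{t+1}}
~\ge~
\re_2\delcc{\v}{\u_{t+1}} + \re_2\delcc{\u_{t+1}}{\tilde\u_{t+1}}
~\ge~
\re_2\delcc{\v}{\u_{t+1}}
.
\]
Combining with the identity above yields, for each $t$,
\[
\lcmix(\u_t,\x_t) - \sum_{k=1}^K \bigl(v^k x_t^{k,1} + (1-v^k) x_t^{k,0}\bigr)
~\le~
\re_2\delcc{\v}{\u_t} - \re_2\delcc{\v}{\u_{t+1}}
.
\]

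Finally I would telescope over $t=1,\ldots,T$, drop the non-negative terminal term $\re_2\delcc{\v}{\u_{T+1}}$, and handle the initial step: $\u_1$ is the $\re_2$-projection of $\vpi$ onto $\U$, so one more application of the Pythagorean inequality gives $\re_2\delcc{\v}{\u_1} \le \re_2\delcc{\v}{\vpi}$, which is exactly the claimed bound. The only delicate point is keeping track of whether the Pythagorean inequality really applies to $\re_2$; since $\re_2$ is a separable sum of strictly convex Bregman divergences on $[0,1]$ and $\U$ is closed convex, this is standard, so the argument should go through cleanly.
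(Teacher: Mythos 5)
Your proof is correct and follows essentially the same route as the paper: the coordinate-wise Bayesian posterior identity turning instantaneous mix loss regret into a difference of binary relative entropies, the generalized Pythagorean inequality to absorb the projection step, and telescoping with the initial projection $\re_2\delcc{\v}{\u_1} \le \re_2\delcc{\v}{\vpi}$. No gaps; the argument matches the paper's proof of Lemma~\ref{lem:CB.bound} step for step.
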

The practicality of Component Bayes does depend on the computational cost of computing the binary relative entropy projection onto the convex set $\U$. Fortunately, in many applications
 $\U$ has a compact representation by means of a few linear inequalities; e.g.\  the Birkhoff polytope
(permutations) and the Flow polytope (paths). See
\citep{koolen10:_hedgin_struc_concep} for examples. Component Bayes may then be implemented using off-the-shelf convex optimization subroutines like CVX.

\subsection{Component iProd}\label{sec:CSquint}
We now return to our original problem of combinatorial prediction with linear loss. Using Component Bayes (which is for mix loss) as a sub-module, we construct an algorithm with second-order quantile bounds. We first have to extend our notion of regret vector\footnote{
Interestingly, the natural generalization of the expert regret vector $r_t^k = (\u_t/K-\e_k)^\top
\vloss_t$, which renormalizes the usage, does \emph{not} result in the
desired result. To see this, consider a perfect scenario with a clearly best concept $\c \in \mathcal C$ on which the learner fully concentrates its predictions $\u_t = \c$. This should not result in any regret compared to $\c$. But for $k \in \c$ we still have $r^k_t \neq 0$ (unless all coordinates $k \in \c$ suffer identical loss), and so the variance may accumulate linearly. 
}. Suppose the learner predicts usage $\u_t \in \U \subseteq [0,1]^K$
and encounters loss vector $\vloss_t \in [-1,+1]^K$. We then define the regret vector $\r_t \in \mathcal [-1,+1]^{K \times \set{0,1}}$ by
\begin{align}
\label{eq:def.regret}
r_t^{k,1} &~\df~ u_t^k \loss_t^k - \loss_t^k
&
\text{and}
&&
r_t^{k,0} &~\df~ u_t^k \loss_t^k.
\end{align}
Fix a prior vector $\vpi \in [0,1]^K$ and prior distribution $\gamma$ on
$[0,1/2]$. We define the \markdef{Component iProd} potential function and predictor by
\begin{align}\label{eq:CSquint}
\Phi(\r_{1:T})
&~\df~
\ex_{\gamma(\eta)} \sbr*{
  e^{- \frac{1}{K} \sum_{t=1}^T \lcmix(\u_t^\eta, \x_t^\eta)}
  - 1
}
,
&
\u_{T}
~\df~
\frac{
  \ex_{\gamma(\eta)} \sbr*{
    e^{- \frac{1}{K} \sum_{t=1}^{T-1} \lcmix(\u_t^\eta, \x_t^\eta)}
    \eta \u_T^{\eta}
  }
}{
  \ex_{\gamma(\eta)} \sbr*{
    e^{- \frac{1}{K} \sum_{t=1}^{T-1} \lcmix(\u_t^\eta, \x_t^\eta)}
    \eta
  }
  \phantom{\!\u_T^{\eta}}
}
,
\end{align}
where $\u_1^\eta, \u_2^\eta, \ldots$ denote the usages of Component Bayes with prior $\vpi$ on losses $\x_1^\eta, \x_2^\eta,\ldots$ set to\footnote{
We could alternatively set $x_t^{\eta,k,b} \df \eta^2 (r_t^{k,b})^2 - \eta r_t^{k,b}$ and prove the same regret bound. But to get the tighter algorithm we delay the bound \eqref{eq:prodbound} to the analysis. See the discussion surrounding \eqref{eq:iprod} about Squint vs iProd.
}
\begin{equation}\label{eq:loss.taste}
x_t^{\eta,k,b}
~\df~
- \ln \del*{1+ \eta r_t^{k,b}}
\end{equation}
Note that $\u_T \in \U$ is a bona fide action, as it is a convex combination of $\u_T^\eta \in \U$.
As can be seen from
\eqref{eqn:mixloss}, this potential generalizes
the iProd \eqref{eq:iprod} potential: in the base case $K=1$ and $\mathcal C= \set{0,1}$
Component iProd reduces to iProd \eqref{eq:iprod} on $K=2$ experts if we set the loss for Component iProd to the difference of the losses for iProd.
We will now show that Component iProd has the desired regret guarantee.

\begin{lemma}\label{lem:CSquint.negpot}
Fix any closed convex $\U \subseteq [0,1]^K$. 
Component iProd \eqref{eq:CSquint} ensures that for any loss sequence $\vloss_1, \ldots, \vloss_T$ in $[-1,+1]^K$ we have
\(
\Phi(\past) \le \ldots \le  \Phi(\emptyset) 
=~
0
\)
.
\end{lemma}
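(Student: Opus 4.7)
The plan is to proceed by induction on $T$. The base case $T=0$ is immediate: the sum in the exponent is empty, so $\Phi(\emptyset) = \ex_{\gamma(\eta)}[e^0 - 1] = 0$. For the inductive step I would show that the per-round increment is non-positive. Writing $W_{T-1}^\eta \df e^{-\frac{1}{K}\sum_{t=1}^{T-1}\lcmix(\u_t^\eta, \x_t^\eta)}$, the increment is
$$\Phi(\r_{1:T}) - \Phi(\r_{1:T-1}) ~=~ \ex_{\gamma(\eta)}\left[W_{T-1}^\eta \left(e^{-\frac{1}{K}\lcmix(\u_T^\eta, \x_T^\eta)} - 1\right)\right].$$

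The first key step is an algebraic collapse that uses our tailored choice of auxiliary losses \eqref{eq:loss.taste} together with the structure of the regret vector \eqref{eq:def.regret}. Plugging $e^{-x_T^{\eta,k,b}} = 1 + \eta r_T^{k,b}$ into the definition of mix loss \eqref{eq:mix.loss.def} and simplifying coordinate-wise, the two-term convex combination inside the logarithm becomes
$$u_T^{\eta,k}(1+\eta r_T^{k,1}) + (1 - u_T^{\eta,k})(1+\eta r_T^{k,0}) ~=~ 1 + \eta (u_T^k - u_T^{\eta,k})\loss_T^k,$$
so $e^{-\lcmix(\u_T^\eta,\x_T^\eta)} = \prod_{k=1}^K\bigl(1 + \eta (u_T^k - u_T^{\eta,k})\loss_T^k\bigr)$.

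Next I would apply AM--GM to the resulting geometric mean: since $\eta \in [0,1/2]$, $u_T^k, u_T^{\eta,k} \in [0,1]$ and $\loss_T^k \in [-1,1]$, each factor lies in $[1/2, 3/2]$, so AM--GM gives
$$\prod_{k=1}^K \bigl(1 + \eta (u_T^k - u_T^{\eta,k})\loss_T^k\bigr)^{1/K} ~\le~ 1 + \frac{\eta}{K}\sum_{k=1}^K (u_T^k - u_T^{\eta,k})\loss_T^k.$$
Subtracting $1$ and substituting into the increment, the bound becomes
$$\Phi(\r_{1:T}) - \Phi(\r_{1:T-1}) ~\le~ \frac{1}{K}\sum_{k=1}^K \loss_T^k\, \ex_{\gamma(\eta)}\left[W_{T-1}^\eta\, \eta\, (u_T^k - u_T^{\eta,k})\right].$$
Finally, the definition of $\u_T$ in \eqref{eq:CSquint} is precisely engineered so that $u_T^k\, \ex_{\gamma(\eta)}[W_{T-1}^\eta \eta] = \ex_{\gamma(\eta)}[W_{T-1}^\eta \eta u_T^{\eta,k}]$ for every coordinate $k$, making each bracket vanish. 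Hence the increment is $\le 0$, completing the induction.

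The main obstacle is the first algebraic collapse: one has to notice that the interaction between the mix-loss form, the log-chosen auxiliary losses $x_t^{\eta,k,b} = -\ln(1+\eta r_t^{k,b})$, and the two-sided regret vector $\r$ conspires to factor into a product over coordinates whose linearization is exactly what the prediction rule \eqref{eq:CSquint} is designed to zero out. Once this factored form is in hand, AM--GM plus the defining property of $\u_T$ finish the proof mechanically; crucially, the bound \eqref{eq:prodbound} does not enter here (as noted in the algorithm's footnote, it is deferred to the subsequent regret analysis).
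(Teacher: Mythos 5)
Your proposal is correct and follows essentially the same route as the paper's proof: per-round non-increase via the algebraic collapse $u_T^{\eta,k}(1+\eta r_T^{k,1})+(1-u_T^{\eta,k})(1+\eta r_T^{k,0})=1+\eta(u_T^k-u_T^{\eta,k})\loss_T^k$, the geometric-to-arithmetic-mean bound for the $1/K$ power (the paper phrases this as Jensen before substituting the losses, you as AM--GM after, which is the same inequality), and the observation that the prediction rule \eqref{eq:CSquint} zeroes the resulting linear term. No gaps.
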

We now establish that non-positive potential implies our desired regret
bound. We express our quantile bound in terms of the $\v$-weighted
cumulative coordinate-wise regret and uncentered variance %
\begin{alignat*}{2}
R_T^\v
&~\df~
\sum_{t=1}^T \sum_{k=1}^K
\del[\big]{
  v^k r_t^{k,1}
  + (1-v^k) r_t^{k,0}
}
&&
~\stackrel{\text{\tiny \eqref{eq:def.regret}}}{=}~
\sum_{t=1}^T (\u_t - \v)^\top \vloss_t
,
\\
V_T^\v &~\df~
\sum_{t=1}^T \sum_{k=1}^K
\del[\big]{
  v^k (r_t^{k,1})^2
  +  (1-v^k) (r_t^{k,0})^2
}
.
\end{alignat*}

\begin{lemma}\label{lem:CSquint.regret.bound}
Suppose $\gamma$ is supported on a discrete grid $\grid \subset [0,1/2]$. 
Component iProd \eqref{eq:CSquint} guarantees that for every $\eta \in \grid$ and for every comparator $\v \in \U$ the regret is at most
\[
\eta R_T^\v
- \eta^2 V_T^\v
~\le~
\re_2\delcc{\v}{\vpi}
- K \ln \gamma(\eta)
.
\]
\end{lemma}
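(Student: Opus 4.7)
The plan is to sandwich the cumulative mix loss $\sum_{t=1}^T \lcmix(\u_t^\eta, \x_t^\eta)$ of the $\eta$-indexed Component Bayes sub-module between a lower bound extracted from the non-positive potential (Lemma~\ref{lem:CSquint.negpot}) and an upper bound coming from the Component Bayes mix-loss regret guarantee (Lemma~\ref{lem:CB.bound}), and then convert the resulting auxiliary comparator loss into the second-order expression $\eta^2 V_T^\v - \eta R_T^\v$ using the log-form of the prod bound \eqref{eq:prodbound}.

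For the lower bound, Lemma~\ref{lem:CSquint.negpot} gives $\ex_{\gamma(\eta)}\bigl[e^{-\frac{1}{K}\sum_t \lcmix(\u_t^\eta, \x_t^\eta)}\bigr] \le 1$. Since $\gamma$ is supported on the discrete grid $\grid$, I lower-bound the expectation by the single-$\eta$ term, obtaining $\gamma(\eta)\, e^{-\frac{1}{K}\sum_t \lcmix(\u_t^\eta, \x_t^\eta)} \le 1$; taking logarithms and multiplying by $-K$ yields $\sum_{t=1}^T \lcmix(\u_t^\eta, \x_t^\eta) \ge K \ln \gamma(\eta)$. For the upper bound, I apply Lemma~\ref{lem:CB.bound} to the Component Bayes instance at learning rate $\eta$, with auxiliary losses $\x_t^\eta$ given by \eqref{eq:loss.taste} and comparator $\v \in \U$, to get
\[
\sum_{t=1}^T \lcmix(\u_t^\eta, \x_t^\eta) \;\le\; \re_2\delcc{\v}{\vpi} + \sum_{t=1}^T \sum_{k=1}^K \bigl(v^k x_t^{\eta,k,1} + (1-v^k) x_t^{\eta,k,0}\bigr).
\]
To finish this side, I invoke $-\ln(1+x) \le x^2 - x$ for $x \ge -1/2$, which is just the logarithm of \eqref{eq:prodbound}, with $x = \eta r_t^{k,b}$. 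The hypothesis is satisfied because $\eta \in [0,1/2]$ and $r_t^{k,b} \in [-1,1]$ (by \eqref{eq:def.regret} together with $\u_t \in [0,1]^K$ and $\vloss_t \in [-1,1]^K$). This gives $x_t^{\eta,k,b} \le \eta^2 (r_t^{k,b})^2 - \eta r_t^{k,b}$, and summing with the convex weights $(v^k, 1-v^k)$ over $t$ and $k$ produces exactly $\eta^2 V_T^\v - \eta R_T^\v$ by the definitions of $R_T^\v$ and $V_T^\v$.

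Chaining the two bounds yields $K \ln \gamma(\eta) \le \re_2\delcc{\v}{\vpi} + \eta^2 V_T^\v - \eta R_T^\v$, which rearranges to the claimed inequality. The argument is essentially mechanical once Lemmas~\ref{lem:CSquint.negpot} and~\ref{lem:CB.bound} are available; the only things to get right are the sign of $\ln \gamma(\eta)$ (which is non-positive, so it contributes a positive term $-K\ln\gamma(\eta)$ on the right) and the direction in which \eqref{eq:prodbound} is applied, so that the $\eta^2 V_T^\v$ and $-\eta R_T^\v$ terms emerge with the correct signs. I do not anticipate a true obstacle: the heavy lifting—bounding the Squint-style potential and bounding Component Bayes's mix-loss regret against an arbitrary point of $\U$—has already been done in the cited lemmas.
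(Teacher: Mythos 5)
Your proof is correct and follows essentially the same route as the paper: extract $\sum_t \lcmix(\u_t^\eta,\x_t^\eta) \ge K\ln\gamma(\eta)$ from the non-positive potential (Lemma~\ref{lem:CSquint.negpot}) using the point mass $\gamma(\eta)$ on the discrete grid, upper bound the same mix loss via Lemma~\ref{lem:CB.bound}, and convert the comparator's auxiliary loss with $-\ln(1+x)\le x^2-x$ for $x=\eta r_t^{k,b}\ge -1/2$, which is exactly the paper's chain of inequalities written as a sandwich instead of a single string. The range check $\eta r_t^{k,b}\ge -1/2$ and the sign bookkeeping you flag are handled correctly.
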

We now discuss the choice of the discrete prior $\gamma$ on $\eta$. Here
we face a trade-off between regret and computation. More discretization
points reduce the regret overhead for mis-tuning, but since we need to
run one instance of Component Bayes per grid point the computation time
also grows linearly in the number of grid points. Fortunately,
Lemma~\ref{lem:CSquint.regret.bound} implies that exponential spacing
suffices, as missing the optimal tuning $\hat \eta = \frac{R_T^\v}{2
V_T^\v}$ by a constant factor affects the regret bound by another
constant factor. To see this, apply Lemma~\ref{lem:CSquint.regret.bound}
to $\eta = \alpha \hat \eta$. We find
\begin{equation}\label{eqn:CSquint.discretised.regret.bound}
R_T^\v
~\le~
\frac{2}{\sqrt{\alpha(2-\alpha)}}
\sqrt{
  V_T^\v
  \del[\big]{
    \re_2\delcc{\v}{\vpi}
    - K \ln \gamma(\alpha \hat \eta)
  }
}
.
\end{equation}
It is therefore sufficient to choose $\eta$ from an exponentially spaced
grid $\grid$. In particular, we propose to let $\gamma$ be the uniform
distribution on
\begin{equation}\label{eqn:grid}
  \grid = \{2^{-i} \mid i=1, \ldots, \ceil{1 + \log_2 T}\}.
\end{equation}
This leads to the following final regret bound:
\begin{theorem}\label{thm:CSquint.final.regret.bound}
  Let $\U \subseteq [0,1]^K$ be closed and convex. Component iProd
  \eqref{eq:CSquint}, with $\gamma$ the uniform prior on grid $\grid$
  from \eqref{eqn:grid} and arbitrary $\vpi \in [0,1]^K$, ensures that,
  for any sequence $\vloss_1 \ldots, \vloss_T$ of $[-1,+1]$-valued loss
  vectors, the regret compared to any $\v \in \U$ is at most
  \begin{equation}\label{eqn:CSquint.final.regret.bound}
    R_T^\v
    \le
    \frac{4}{\sqrt{3}}
    \sqrt{
      V_T^\v
      \del[\big]{
        \re_2\delcc{\v}{\vpi}
        + K \ln \ceil{1 + \log_2 T}
      }
    }
    + 4\re_2\delcc{\v}{\vpi}
    + K \max\{4\ln \ceil{1 + \log_2 T},1\}.
  \end{equation}
\end{theorem}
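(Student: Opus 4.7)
The plan is to invoke Lemma~\ref{lem:CSquint.regret.bound} at a carefully chosen grid point. Since $\gamma$ is the uniform distribution on $\grid$ of cardinality $M \df \lceil 1 + \log_2 T \rceil$, we have $-K \ln \gamma(\eta) = K \ln M$ for every $\eta \in \grid$, and the lemma reduces to
\[
  \eta R_T^\v - \eta^2 V_T^\v ~\le~ C \qquad \text{for every } \eta \in \grid,
\]
where I would abbreviate $C \df \re_2\delcc{\v}{\vpi} + K \ln M$. The oracle learning rate that balances the two left-hand terms is $\hat\eta = R_T^\v / (2 V_T^\v)$ (assuming without loss of generality that $R_T^\v > 0$, in which case $V_T^\v > 0$ as well since $V_T^\v = 0$ forces $R_T^\v = 0$). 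I would then split into three cases depending on whether $\hat\eta$ falls inside, above, or below the grid's effective range $[2^{-M}, 1/2]$.

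In the main case $\hat\eta \in [2^{-M}, 1/2]$, the exponential spacing of $\grid$ delivers a grid point $\eta = 2^{-i} \in [\hat\eta / 2, \hat\eta]$. Writing $\eta = \alpha \hat\eta$ with $\alpha \in [1/2, 1]$, the identity $\eta R_T^\v - \eta^2 V_T^\v = \frac{(R_T^\v)^2}{4 V_T^\v}\,\alpha(2 - \alpha)$ (exactly the computation already behind \eqref{eqn:CSquint.discretised.regret.bound}), combined with $\alpha(2 - \alpha) \ge 3/4$, yields $R_T^\v \le \frac{4}{\sqrt 3} \sqrt{V_T^\v\, C}$, reproducing the leading term of \eqref{eqn:CSquint.final.regret.bound}. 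When $\hat\eta > 1/2$ (equivalently $R_T^\v > V_T^\v$), I would instantiate the lemma at the largest grid point $\eta = 1/2$ to get $\tfrac{1}{2} R_T^\v - \tfrac{1}{4} V_T^\v \le C$, and substitute $V_T^\v < R_T^\v$ to obtain $R_T^\v \le 4 C = 4\re_2\delcc{\v}{\vpi} + 4 K \ln M$, which is absorbed by the second and third terms of \eqref{eqn:CSquint.final.regret.bound}. In the remaining case $\hat\eta < 2^{-M}$, no appeal to the lemma is necessary: since $2^{-M} \le 1/(2T)$ and $V_T^\v \le KT$ (each of the $KT$ summands in $V_T^\v$ is bounded by one), we directly get $R_T^\v = 2 \hat\eta V_T^\v < 2^{-M+1} V_T^\v \le K$, which the third term $K \max\{4 \ln M, 1\}$ absorbs. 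Non-negativity of each summand on the right-hand side of \eqref{eqn:CSquint.final.regret.bound} then yields the bound in all three cases.

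I do not expect a genuine obstacle, since the argument is essentially a clean case split that packages the algebra already present in \eqref{eqn:CSquint.discretised.regret.bound}. The most delicate bookkeeping will be (i) picking $\alpha = 1/2$ (rather than another value in $[1/2, 1]$) to extract the advertised constant $\tfrac{4}{\sqrt 3}$, and (ii) recognising why the third term reads ``$\max\{4 \ln M, 1\}$'' rather than $4 K \ln M$: this extra slack is precisely what rescues the small-$\hat\eta$ case in the degenerate regime $T = 1$, where $M = 1$ and $\ln M = 0$.
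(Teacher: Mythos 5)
Your proposal is correct and follows essentially the same route as the paper's proof: invoke Lemma~\ref{lem:CSquint.regret.bound} with the uniform grid prior, split on whether $\hat\eta = R_T^\v/(2V_T^\v)$ lies in the grid's range, above $1/2$, or below it, and use $\alpha(2-\alpha)\ge 3/4$ in the main case, $\eta=1/2$ with $R_T^\v>V_T^\v$ in the large-$\hat\eta$ case, and $V_T^\v\le KT$ to get $R_T^\v< K$ in the small-$\hat\eta$ case. Your observation about the $\max\{4\ln\lceil 1+\log_2 T\rceil,1\}$ term rescuing the degenerate $T=1$ regime matches the role it plays in the paper's bound.
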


\paragraph{Discussion of Component iProd}\label{sec:CSquint.discussion}

We showed that if we have an algorithm for keeping the \emph{mix loss}
regret small compared to some concept class, we can run multiple
instances, each with a different learning rate factored into the losses,
and as a result also keep the linear loss small with second order
quantile bounds. Another setting where this could be applied is to
switching experts. The Fixed Share algorithm by
\cite{HerbsterWarmuth1998} applies to all Vovk mixable losses, so in
particular to the mix loss, and delivers adaptive regret bounds
\citep{adaptive.regret}. Aggregating over $\log_2 T$ exponentially
spaced $\eta$ to learn the \emph{learning rate} would indeed be very
cheap. 
Yet another setting is matrix-valued prediction under linear loss \citep{meg}, where our method would transport the mix loss bounds of \cite{matrixbayes} to second-order quantile bounds.

In Lemma~\ref{lem:CSquint.regret.bound} we see
that the cost $- \ln \gamma(\eta)$ for learning the learning rate $\eta$
occurs multiplied by the ambient dimension $K$. Intuitively
this seems wasteful, as we are not trying to learn a separate rate for
each component. But we could not reduce $K$ to $1$. For example, 
defining the potential \eqref{eq:CSquint} \emph{without}
the division by $K$ escalates its dependency on the
loss $\vloss$ from linear to polynomial of order $K$. Unfortunately this
potential cannot be kept below zero even for $K=2$.

\section{Conclusion and Future Work}\label{sec:conclusion}

We have constructed second-order quantile methods for both the expert
setting (Squint) and for general combinatorial games (Component iProd). The key
in both cases is the ability to learn the appropriate learning rate,
which is reflected by the integrals over $\eta$ in our potential
functions \eqref{eq:Squint} and \eqref{eq:CSquint}. As discussed in
Section~\ref{sec:relatedwork}, there is a whole variety of different
ways to adapt to the optimal $\eta$. This raises the question of whether
there is a unifying perspective that explains when and how it is possible to
learn the learning rate in general.

Another issue for future work is to find matching lower bounds.
Although lower bounds in terms of $\sqrt{T \ln K}$ are available for the worst
possible sequence \citep{CesaBianchiLugosi2006}, the issue is substantially more complex when considering either variances or quantiles. We are not aware of any
lower bounds in terms of the variance $V_T^k$. \citet{GoferMansour2012}
provide lower bounds  that hold for \emph{any}
sequence, in
terms of the squared loss ranges in each round, but these do not apply to methods that
adaptively tune their learning rate.
For quantile bounds, \cite{pareto_regret} takes a first step by characterizing the Pareto optimal quantile bounds for $2$ experts in the $\sqrt{T}$ regime.

Finally, we have assumed throughout that all losses are normalized to the range
$[0,1]$. But there exist second-order methods that do not require this
normalization and can adapt automatically to the loss range
\citep{CesaBianchiMansourStoltz2007,ftl.jmlr,Wintenberger2014Arxiv}. It
is an open question how such adaptive techniques can be incorporated
elegantly into our methods.

\acks{We thank a bunch of people.}

\DeclareRobustCommand{\VAN}[3]{#3} %

\bibliography{colt2015}

\newpage
\appendix

\section{Proofs}\label{appx:proofs}
This section collects the proofs omitted from Sections~\ref{sec:priors} and~\ref{sec:combinatorial}.

\subsection{Theorem~\ref{thm:conjugate}}
\begin{proof}
  Abbreviate $R = R_T^\refexperts + a$ and $V = V_T^\refexperts + b$.
  Then from \eqref{eqn:supermartless1} and Jensen's inequality we obtain
  \begin{equation*}
    1 \geq \ex_{\pi(k)\gamma(\eta)} \sbr*{e^{\eta R_T^k - \eta^2 V_T^k}}
      \geq \frac{ \pi(\refexperts) \ex_{\pi(k|\refexperts)} \int_0^{1/2} 
      e^{\eta (R_T^k+a) - \eta^2 (V_T^k+b)} \intder \eta}{Z(a,b)}
      \geq \frac{\pi(\refexperts) \int_0^{1/2} 
      e^{\eta R - \eta^2 V} \intder \eta}{Z(a,b)}.
  \end{equation*}
  The $\eta$ that maximizes $\eta R - \eta^2 V$ is $\mleta =
  \frac{R}{2V}$. Without loss of generality, we can assume that
  $\mleta \geq \frac{1}{\sqrt{2V}} \geq 0$, because otherwise $R \leq
  \sqrt{2V}$, from which \eqref{eqn:conjugateregret} follows directly.
  Now let $[u,v] \subseteq [0,\frac{1}{2}]$ be any interval such that $v
  \leq \mleta$. Then, because $\eta R - \eta^2 V$ is non-decreasing
  in $\eta$ for $\eta \leq \mleta$, we have
  \begin{equation*}
    \int_0^{1/2} 
      e^{\eta R - \eta^2 V} \intder \eta
      \geq \int_u^v
      e^{\eta R - \eta^2 V} \intder \eta\\
      \geq 
      (v-u) e^{u R - u^2 V},
  \end{equation*}
  so that the above two equations imply
  \begin{equation}\label{eqn:conjugateinterval}
    u R - u^2 V \leq \ln
    \left(\frac{Z(a,b)}{\pi(\refexperts)(v-u)}\right).
  \end{equation}
  Suppose first that $\mleta \leq 1/2$. Then we take $v = \mleta$ and $u
  = \mleta - \frac{1}{\sqrt{2V}}$. Plugging these into
  \eqref{eqn:conjugateinterval} we obtain
  \begin{equation*}
    R \leq 2\sqrt{V \left(\half + \ln
      \left(\frac{Z(a,b)\sqrt{2 V}}{\pi(\refexperts)}\right)\right)},
  \end{equation*}
  which implies \eqref{eqn:conjugateregret}.
  Alternatively, we may have $\mleta > 1/2$, which is equivalent to $R
  > V$. Then the left-hand side of \eqref{eqn:conjugateinterval} is at
  least $u (1-u) R$ and hence we obtain
  \begin{align*}
    R &\leq \frac{1}{(1-u)u} \ln
      \left(\frac{Z(a,b)}{\pi(\refexperts)(v-u)}\right).
  \end{align*}
  Taking $u = \frac{5 - \sqrt{5}}{10}$ and $v=1/2$ then leads to the
  bound
  \[
    R \leq 5 \ln
      \left(\frac{2\sqrt{5}Z(a,b)}{\pi(\refexperts)}\right),
  \]
  which again implies \eqref{eqn:conjugateregret}.
\end{proof}

\subsection{Theorem~\ref{thm:logcauchy}}
\begin{proof}
  Abbreviate $R = R_T^\refexperts$ and $V =  V_T^\refexperts$, and let
  $\mleta = \frac{R}{2V}$ be the $\eta$ that maximizes $\eta R - \eta^2
  V$. Then $\eta R - \eta^2 V$ is non-decreasing in $\eta$ for $\eta
  \leq \mleta$ and hence, for any interval $[u,v] \subseteq [0,1/2]$
  such that $v \leq \mleta$, we obtain from \eqref{eqn:supermartless1}
  and Jensen's inequality that
  \begin{equation}\label{eqn:logcauchygeneral}
    1 %
      \geq \pi(\refexperts) \ex_{\pi(k|\refexperts)\gamma(\eta)} \sbr*{e^{\eta R_T^k -
      \eta^2 V_T^k}}
      \geq \pi(\refexperts) \ex_{\gamma(\eta)} \sbr*{e^{\eta R - \eta^2 V}}
      \geq \pi(\refexperts) \gamma([u,v])
      e^{u R - u^2 V},
  \end{equation}
  where
  \begin{equation}\label{eqn:logcauchylower}
    \gamma([u,v]) = \int_u^v \frac{\ln 2}{ \eta (\ln \eta)^2} \intder \eta
        = \frac{\ln(2)}{\ln(\frac{1}{v})} -
        \frac{\ln(2)}{\ln(\frac{1}{u})}
        \geq \frac{\ln(2)\ln(\frac{v}{u})}{\ln^2(\frac{1}{u})}.
  \end{equation}
  If $R \leq 2\sqrt{V}$, then \eqref{eqn:logcauchy} follows by
  considering the cases $V \leq 4$ and $V \geq 4$, so suppose that $R
  \geq 2 \sqrt{V}$, which implies that $\mleta \geq
  \frac{1}{\sqrt{2 V}}$.

  Now suppose first that $\mleta \leq 1/2$. Then we take $v = \mleta$ and $u
  = \mleta - \frac{1}{\sqrt{2 V}} \geq 0$, for which
  \begin{equation*}
    e^{u R - u^2 V}\frac{\ln(\frac{v}{u})}{\ln^2(\frac{1}{u})}
      = \frac{e^{\frac{R^2}{4V} -
      \half}\ln\Big(\frac{1}{1-\frac{\sqrt{2V}}{R}}\Big)}
      {\ln^2(\frac{2V}{R-\sqrt{2V}})}
      \geq \frac{e^{\frac{R^2}{4V} -
      \half}\ln\Big(\frac{1}{1-\frac{\sqrt{2V}}{R}}\Big)}
      {\ln^2(\frac{2\sqrt{V}}{2-\sqrt{2}})},
  \end{equation*}
  where the inequality follows from $R \geq 2\sqrt{V}$. By $e^{\half(x^2
  -1)} = e^{\half(x-1)^2}e^{x-1} \geq e^{\half(x-1)^2}x$ and
  $\ln\frac{1}{1-x} \geq x$, we can lower bound the numerator with
  \begin{equation*}
    e^{\frac{R^2}{4V} - \half}\ln\Big(\frac{1}{1-\frac{\sqrt{2V}}{R}}\Big)
      \geq e^{\half (\frac{R}{\sqrt{2V}}-1)^2}
      \tfrac{R}{\sqrt{2V}}\tfrac{\sqrt{2V}}{R}
      = e^{\half (\frac{R}{\sqrt{2V}}-1)^2}.
  \end{equation*}
  Putting everything together, we obtain
  \begin{equation*}
    1 \geq \frac{\pi(\refexperts) \ln(2) e^{\half (\frac{R}{\sqrt{2V}}-1)^2}}
      {\ln^2\big(\frac{2\sqrt{V}}{2-\sqrt{2}} \big)},
  \end{equation*}
  which implies
  \begin{equation*}
    R \leq \sqrt{2V}\left(1 +
    \sqrt{2\ln\left(\frac{\ln^2\big(\frac{2\sqrt{V}}{2-\sqrt{2}}
    \big)}{\pi(\refexperts) \ln(2)}\right)}\right),
  \end{equation*}
  and \eqref{eqn:logcauchy} is satisfied.

  It remains to consider the case $\mleta > 1/2$, which implies $R > V$.
  Then we take $v=1/2$, and \eqref{eqn:logcauchygeneral} leads to 
  \begin{equation*}
    u R - u^2 V  \leq -\ln \pi(\refexperts) -\ln\Big(
      1 - \frac{\ln(2)}{\ln(\frac{1}{u})}\Big).
  \end{equation*}
  Using $R > V$, the left-hand side is at most $u(1-u)R$. The choice $u
  = \frac{5-\sqrt{5}}{10}$ then again implies \eqref{eqn:logcauchy},
  which completes the proof.
\end{proof}

\subsection{Theorem~\ref{thm:improper.bound}}

\begin{proof}
The proof of Lemma~\ref{lem:supermartless1} goes through unchanged for
the improper prior, but we have to be careful, because we cannot pull
out the constant $1$ from the integral over $\eta$ in the potential
function any more. So abbreviate $R = R_T^\refexperts$ and $V =
V_T^\refexperts$. Then, by \eqref{eqn:supermartless1}, $R \geq -T$, $V
\leq T$, and Jensen's inequality, \begin{align*}
0
~\ge~ 
\Phi(\past) 
&~=~
\ex_{\pi(k)}\sbr*{\int_0^{1/2} \frac{e^{\eta R_T^k - \eta^2 V_T^k}
-1}{\eta} \dif \eta}
\\
&~\ge~
\pi(\refexperts) \ex_{\pi(k\mid \refexperts)}\sbr*{\int_0^{1/2} \frac{e^{\eta
R_T^k - \eta^2 V_T^k} -1}{\eta} \dif \eta}
+
(1-\pi(\refexperts))  \int_0^{1/2} \frac{e^{-\eta T - \eta^2 T}
-1}{\eta} \dif \eta\\
&~\ge~
\pi(\refexperts) \int_0^{1/2} \frac{e^{\eta
R - \eta^2 V} -1}{\eta} \dif \eta
+
(1-\pi(\refexperts))  \int_0^{1/2} \frac{e^{-\eta T - \eta^2 T}
-1}{\eta} \dif \eta.
\end{align*}
Now first for the bad experts that are not in $\refexperts$, we will show
that
\begin{equation}\label{eqn:badexpertbound}
\int_0^{1/2} \frac{e^{-\eta T - \eta^2 T} -1}{\eta} \dif \eta ~\ge~ 
-\frac{1}{2} - \ln (T+1).
\end{equation}
Let $\epsilon \in [0,1/2]$ be arbitrary. Then, using $e^x \geq 1+x$ and $e^x
\geq 0$, we obtain 
\begin{align*}
  \int_0^{1/2} \frac{e^{-\eta T - \eta^2 T} -1}{\eta} \dif \eta
  &= \int_0^\epsilon \frac{e^{-\eta T - \eta^2 T} -1}{\eta} \dif \eta
  + \int_\epsilon^{1/2} \frac{e^{-\eta T - \eta^2 T} -1}{\eta} \dif \eta\\
  &\geq \int_0^\epsilon \frac{-\eta T - \eta^2 T}{\eta} \dif \eta
  +\int_\epsilon^{1/2} \frac{-1}{\eta} \dif \eta
  = -\epsilon T - \frac{\epsilon^2}{2} T
  +\ln(2\epsilon).
\end{align*}
The choice $\epsilon = \frac{1}{2(T+1)}$ implies
\eqref{eqn:badexpertbound} for all $T \geq 0$.

Second, for the good experts that are in $\refexperts$, we proceed as follows.
Let $\mleta = \frac{R}{2V}$ be the $\eta$ that maximizes $\eta R-\eta^2
V$. Then $\eta R - \eta^2 V$ is non-decreasing in $\eta$ for $\eta \leq
\mleta$ and hence, for any interval $[u,v] \subseteq [0,1/2]$ such that
$v \leq \mleta$,
\begin{align}
\int_0^{1/2} \frac{e^{\eta R - \eta^2 V} -1}{\eta} \dif \eta
&~\ge~
\int_0^u \frac{e^{0 R - 0 V} -1}{\eta}\dif \eta
+
(e^{u R - u^2 V}-1) \int_u^{v} \frac{1}{\eta} \dif \eta
-
\int_v^{1/2} \frac{1}{\eta} \dif \eta\notag
\\
&~=~
\del*{e^{u R - u^2 V}-1} \ln \frac{v}{u}
+
\ln (2 v)
.\label{eqn:NoNastyEta}
\end{align}
We may assume without loss of generality that $R \geq 2 \sqrt{V}$
(otherwise \eqref{eqn:improper.bound} follows directly), which implies
that $\mleta \geq \frac{1}{\sqrt{2V}}$.

We now have two cases. Suppose first that $\mleta \le 1/2$. Then we plug
in $v = \mleta$ and $u = \mleta - \frac{1}{\sqrt{2 V}}$ and use $R \geq
2 \sqrt{V}$ to find that
\begin{align*}
\int_0^{1/2} \frac{e^{\eta R - \eta^2 V} -1}{\eta} \dif \eta
&~\ge~
\left(e^{\frac{R^2}{4 V}-\frac{1}{2}}-1\right) \ln
\left(\frac{1}{1-\frac{\sqrt{2 V}}{R}}\right)
+\ln \left(\frac{R}{V}\right)
\\
&~\ge~
\left(e^{\frac{R^2}{4 V}-\frac{1}{2}}-1\right) \ln
\left(\frac{1}{1-\frac{\sqrt{2 V}}{R}}\right)
-\thalf \ln \left(\frac{V}{4}\right)
.
\end{align*}
Using 
$e^{\half (x^2-1)} = e^{\half (x-1)^2} e^{x-1} \ge e^{\half(x-1)^2} x$,
$-1 \ge -\frac{R}{\sqrt{2 V}}$ and $\ln \frac{1}{1-x} \ge x$, we find
\[
\left(e^{\frac{R^2}{4 V}-\frac{1}{2}}-1\right) \ln
\left(\frac{1}{1-\frac{\sqrt{2 V}}{R}}\right)
~\ge~
\left(e^{\frac{1}{2} \del*{\frac{R}{\sqrt{2 V}}-1}^2} \frac{R}{\sqrt{2 V}} - \frac{R}{\sqrt{2 V}}\right) \frac{\sqrt{2 V}}{R}
~=~
e^{\frac{1}{2} \del*{\frac{R}{\sqrt{2 V}}-1}^2} - 1
.
\]
Putting everything together and using $V \leq T$ together with $1+\thalf
\ln \frac{T}{4} \leq \thalf + \ln(T+1)$ for $T \geq 1$,
we get
\begin{align*}
0 
&\geq
\pi(\refexperts) \del*{
  e^{\frac{1}{2} \del*{\frac{R}{\sqrt{2 V}}-1}^2} - 1
  - \thalf \ln \frac{V}{4}
}
-(1-\pi(\refexperts)) \del*{\thalf + \ln (T+1)}\\
&\geq
\pi(\refexperts) \del*{
  e^{\frac{1}{2} \del*{\frac{R}{\sqrt{2 V}}-1}^2}}
-\del*{\thalf + \ln (T+1)},
\end{align*}
which implies
\[
R
~\le~
\sqrt{2 V} \del*{1 + \sqrt{2\ln \del*{\frac{\half + \ln (T+1)}{\pi(\refexperts)}}}}
,
\]
and \eqref{eqn:improper.bound} follows.

It remains to consider the case that $\mleta > 1/2$, which implies $R >
V$. We then use $v=1/2$, for which \eqref{eqn:NoNastyEta} leads to
\[
\int_0^{1/2} \frac{e^{\eta R - \eta^2 V} -1}{\eta} \dif \eta
~\ge~
(e^{u R - u^2 V} - 1) \ln \frac{1}{2 u}   %
~\ge~
(e^{u (1-u) R} - 1) \ln\frac{1}{2 u}
.
\]
Putting everything together then gives
\[
R
~\le~
\frac{1}{u (1 - u)}
\ln \del*{1 + \frac{(1-\pi(\refexperts))\big(\thalf +  \ln (T+1)\big)}
  {- \ln (2 u) \pi(\refexperts)}}
~\le~
\frac{1}{u (1 - u)}
\ln \del*{1 + \frac{\thalf +  \ln (T+1)}
  {- \ln (2 u) \pi(\refexperts)}}
.
\]
And \eqref{eqn:improper.bound} follows by plugging in
$u=\frac{5-\sqrt{5}}{10}$.
\end{proof}

\subsection{Lemma~\ref{lem:CB.bound}}

\begin{proof}
Note that \eqref{eq:CB.unconstrained} is minimized at the independent component-wise posteriors
\begin{equation}\label{eq:unc.min}
\tilde u_{t+1}^k
~=~
\frac{
  u_t^k e^{-x_t^{k,1}}
}{
  u_t^k e^{-x_t^{k,1}} +
  (1- u_t^k) e^{-x_t^{k,0}}
}
.
\end{equation}
The instantaneous mix loss regret in coordinate $k$ in round $t$ hence equals
\begin{align*}
&
- \ln \del*{  u_t^k e^{-x_t^{k,1}} +
  (1- u_t^k) e^{-x_t^{k,0}}}
- v^k x_t^{k,1}
- (1-v^k) x_t^{k,0}
\\
&~=~
v^k \ln \frac{\tilde u_{t+1}^k}{u_t^k}
+ (1-v^k) \ln \frac{1-\tilde u_{t+1}^k}{1-u_t^k}
\\
&~=~
\re_2 \delcc{v^k}{u_t^k} - \re_2 \delcc{v^k}{\tilde u_{t+1}^k}
,
\end{align*}
and we can write the cumulative regret as
\begin{align}\label{eq.CB.regret}
\sum_{t=1}^T \sum_{k=1}^K \del[\Big]{
  \re_2 \delcc{v^k}{u_t^k} - \re_2 \delcc{v^k}{\tilde u_{t+1}^k}
}
&~=~
\sum_{t=1}^T \del[\Big]{
  \re_2 \delcc{\v}{\u_t} - \re_2 \delcc{\v}{\tilde \u_{t+1}}
}.
\end{align}
As $\re_2$ is a Bregman divergence (for convex generator $F(\x) ~=~ \sum_k x_k \ln x_k + (1-x_k) \ln (1-x_k)$), it is non-negative and satisfies the
generalized Pythagorean inequality for Bregman divergences \citep[Lemma~11.3]{CesaBianchiLugosi2006}. Since $\v \in \U$ and $\u_{t+1}$
is the projection of $\tilde \u_{t+1}$ onto $\U$, these properties
together imply that
\[
\re_2 \delcc{\v}{\u_{t+1}}
~\le~
\re_2 \delcc{\v}{\u_{t+1}}
+
\re_2 \delcc{\u_{t+1}}{\tilde \u_{t+1}}
~\le~
\re_2 \delcc{\v}{\tilde \u_{t+1}}
.
\]
Hence the cumulative mix loss regret satisfies
\begin{align*}
\eqref{eq.CB.regret}
&~\le~
\sum_{t=1}^T \del[\Big]{
  \re_2 \delcc{\v}{\u_t} - \re_2 \delcc{\v}{\u_{t+1}}
}
~=~
\re_2 \delcc{\v}{\u_1} - \re_2 \delcc{\v}{\u_{T+1}}
~\le~
\re_2 \delcc{\v}{\vpi},
\end{align*}
as required.
\end{proof}

\subsection{Lemma~\ref{lem:CSquint.negpot}}
\begin{proof}
First, observe that, for any $\eta$,
\begin{align*}
e^{- \frac{1}{K} \lcmix(\u_t^\eta, \x_t^\eta)}
&~\stackrel{\text{\tiny \eqref{eq:mix.loss.def}, Jensen}}{\le}~
  \frac{1}{K} \sum_{k=1}^K \del*{u_t^{\eta,k} e^{-x_t^{\eta,k,1}} + (1-u_t^{\eta,k}) e^{- x_t^{\eta, k,0}}}
\\
&~\stackrel{\text{\tiny \eqref{eq:loss.taste}}}{=}~
\frac{1}{K} \sum_{k=1}^K \del*{u_t^{\eta,k} (1+\eta r^{k,1}) + (1-u_t^{\eta,k}) (1+ \eta r^{k,0})}
\\
&~\stackrel{\text{\tiny \eqref{eq:def.regret}}}{=}~
1 +
\frac{\eta}{K} \sum_{k=1}^K \del*{u_t^k  - u_t^{\eta,k}} \loss^k
~=~
1 +
\frac{\eta}{K}  \del*{\u_t  - \u_t^{\eta}}^\top \vloss.
\end{align*}
We hence have
\begin{align*}
\Phi(\r_{1:T+1}) - \Phi(\past)
&~=~
\ex_{\gamma(\eta)} \sbr*{
  e^{- \frac{1}{K} \sum_{t=1}^T \lcmix(\u_t^\eta, \x^\eta)}
  \del*{
  e^{- \frac{1}{K} \lcmix(\u_{T+1}^\eta, \x_{T+1}^\eta)}
  - 1
  }
}
\\
&~\le~
\ex_{\gamma(\eta)} \sbr*{
  e^{- \frac{1}{K} \sum_{t=1}^T \lcmix(\u_t^\eta, \x^\eta)}
  \frac{\eta}{K} \del*{\u_{T+1}  - \u^{\eta}_{T+1}}^\top \vloss
}
~=~ 0,
\end{align*}
where the last equality is by design of the weights \eqref{eq:CSquint}.
\end{proof}

\subsection{Lemma~\ref{lem:CSquint.regret.bound}}
\begin{proof}
Lemma~\ref{lem:CSquint.negpot} tells us that Component iProd ensures
$\Phi(\past) \le 0$. For any $\eta,$ this implies
\begin{align*}
- K \ln \gamma(\eta)
&~\stackrel{\text{\tiny Lemma~\ref{lem:CSquint.negpot}}}{\ge}~
- \sum_{t=1}^T \lcmix(\u_t^\eta, \x_t^\eta)
~\stackrel{\text{\tiny Lemma~\ref{lem:CB.bound}}}{\ge}~
- \sum_{t=1}^T \sum_{k=1}^K \del*{v^k x_t^{\eta,k,1} + (1-v^k) x_t^{\eta,k,0}}
-  \re_2\delcc{\v}{\vpi}
\\
&~\stackrel{\text{\tiny \eqref{eq:loss.taste},\eqref{eq:prodbound}}}{\ge}~
-  \sum_{t=1}^T \sum_{k=1}^K \del*{
  v^k \del*{\eta^2 (r_t^{k,1})^2 - \eta r_t^{k,1}}
  + (1-v^k) \del*{\eta^2 (r_t^{k,0})^2 - \eta r_t^{k,0}}
}
- \re_2\delcc{\v}{\vpi}
\\
&~=~
  \eta R_T^\v
- \eta^2 V_T^\v
- \re_2\delcc{\v}{\vpi},
\end{align*}
from which the result follows.
\end{proof}

\subsection{Theorem~\ref{thm:CSquint.final.regret.bound}}
\begin{proof}
The exponentially spaced grid of learning rates ensures that, for any
$\eta \in [\tfrac{1}{2T},\thalf]$, there always exists an $\alpha \in
[\thalf,1]$ for which $\alpha \eta$ is a grid point. Hence, whenever
$\hat \eta = \frac{R_T^\v}{2 V_T^\v} \in [\tfrac{1}{2T},1/2]$,
\eqref{eqn:CSquint.discretised.regret.bound} implies that
\[
R_T^\v
~\le~
\frac{4}{\sqrt{3}}
\sqrt{
  V_T^\v
  \del[\big]{
    \re_2\delcc{\v}{\vpi}
    + K \ln \ceil{1 + \log_2 T}
  }
}
,
\]
and \eqref{eqn:CSquint.final.regret.bound} is satisfied.
Alternatively, if $\hat \eta < \tfrac{1}{2T}$, then $R_T^\v < V_T^\v/T \le K$, 
and \eqref{eqn:CSquint.final.regret.bound} again holds. Finally,
supppose that $\hat \eta > 1/2$. Then $R_T^\v > V_T^\v$, and plugging
$\eta=1/2$ into Lemma~\ref{lem:CSquint.regret.bound} results in
\[
\frac{1}{2} R_T^\v
- \frac{1}{4} V_T^\v
~\le~
\re_2\delcc{\v}{\vpi}
+ K \ln \ceil{1 + \log_2 T}
.
\]
Using that $R_T^\v > V_T^\v$, the left-hand side is at most
$\frac{1}{4}R_T^\v$, from which \eqref{eqn:CSquint.final.regret.bound}
follows.
\end{proof}

\section{Numerical stability}\label{eq:numerics}
Although we are not numerical specialists, it is clear that some care should be taken evaluating the weight expressions for the conjugate prior \eqref{eq:conjugate.weights} and improper prior \eqref{eq:Squint.improper}. 
Initially, and as long as $V=0$, we have $R=0$ and hence by \eqref{eq:Squint} Squint sets the weights $\w$ equal to the prior $\pi$. We now assume $V>0$, and look at \eqref{eq:conjugate.weights} and \eqref{eq:Squint.improper}. Both involve a contribution of the form
\begin{equation}\label{eq:xi}
\frac{\sqrt{\pi } e^{\frac{R^2}{4 V}} \left(\erf\left(\frac{R}{2 \sqrt{V}}\right)-\erf\left(\frac{R-V}{2 \sqrt{V}}\right)\right)}{2 \sqrt{V}}
.
\end{equation}
This expression is empirically numerically stable unless both $\erf$ arguments fall outside $[-6,6]$ to the same side. In other words, it can be used when
\begin{equation}\label{eq:stability}
-6
~\le~ \frac{R}{2 \sqrt{V}}
\quad
\text{and}
\quad
\frac{R-V}{2 \sqrt{V}}
~\le~
6
,
\qquad
\text{that is}
\qquad
R \in [-12 \sqrt{V}, V + 12 \sqrt{V}]
.
\end{equation}
If we are not in this range, then we are feeding extreme arguments into both $\erf$s. Hence we may Taylor expand \eqref{eq:xi} around $R = \pm \infty$ (both of which give the same result) to get
\[
\frac{e^{\frac{R}{2}-\frac{V}{4}}-1}{R}
\quad
\text{($0$th and $1$st order)}
\qquad
\text{or}
\qquad
\frac{e^{\frac{R}{2}-\frac{V}{4}} (R+V)-R}{R^2}
\quad
\text{($2$nd order)}
.
\]
Note that this $0$th order expansion is negative for $R \in [0, V/2]$, but that falls well within the stable range \eqref{eq:stability} where we should use \eqref{eq:xi} directly.

\end{document}